\pgfplotsset{compat=1.17}
\theoremstyle{plain}
\newtheorem{theorem}{Theorem}[section]
\newtheorem{lemma}[theorem]{Lemma}
\newtheorem{proposition}[theorem]{Proposition}
\theoremstyle{definition}
\newtheorem{definition}[theorem]{Definition}
\newtheorem{remark}[theorem]{Remark}
\newtheorem{fact}[theorem]{Fact}
\DeclareMathOperator*{\argmin}{arg\,min}
\newcommand{\R}{\mathbb{R}}
\newcommand{\cT}{\mathcal{T}}
\newcommand{\cS}{\mathcal{S}}
\newcommand{\bx}{\bm{x}}
\newcommand{\by}{\bm{y}}
\newcommand{\bg}{\bm{g}}
\newcommand{\bA}{\bm{A}}
\newcommand{\bB}{\bm{B}}
\newcommand{\bG}{\bm{G}}
\newcommand{\bV}{\bm{V}}
\newcommand{\bd}{\bm{d}}
\newcommand{\bD}{\bm{D}}
\newcommand{\bY}{\bm{Y}}
\newcommand{\bL}{\bm{L}}
\newcommand{\bP}{\bm{P}}
\icmltitlerunning{NOVA: Numerical Optimization of Vandermonde Arithmetic}
\begin{document}

\twocolumn[
\icmltitle{NOVA: Discovering Well-Conditioned Winograd Transforms \\
through Numerical Optimization of Vandermonde Arithmetic}

\begin{icmlauthorlist}
\icmlauthor{Jayant Lohia}{ind}
\end{icmlauthorlist}

\icmlaffiliation{ind}{Independent Researcher}

\icmlcorrespondingauthor{Jayant Lohia}{jayantlohia16@gmail.com}

\icmlkeywords{NOVA, Winograd convolution, numerical stability, condition number, evolution strategy, quantization, deep learning inference}

\vskip 0.3in
]

\printAffiliationsAndNotice{}

\begin{abstract}

Winograd convolution is the standard algorithm for efficient inference, reducing arithmetic complexity by 2.25$\times$ for 3$\times$3 kernels. However, it faces a critical barrier in the modern era of low-precision computing: numerical instability. As tiles scale to maximize efficiency (e.g., F(6,3), F(8,3)), the condition numbers of standard integer-based transforms explode---reaching $\kappa \approx 2 \times 10^5$ for F(8,3)---rendering them unusable in FP16 or Int8.

We introduce \textbf{NOVA} (\textbf{N}umerical \textbf{O}ptimization of \textbf{V}andermonde \textbf{A}rithmetic), a discovery framework that breaks the decades-old convention of integer interpolation. Treating Winograd point selection as a continuous optimization problem, NOVA searches the manifold $\mathbb{R}^{n-1}$ via Evolution Strategy, snaps candidates to simple rationals, and guarantees correctness via symbolic verification. This process uncovers a hidden landscape of stable, fractional configurations---such as $\{\pm\nicefrac{5}{6}, \pm\nicefrac{7}{6}, \pm\nicefrac{3}{5}\}$---that defy traditional vocabulary constraints.

The impact is transformative: NOVA improves the conditioning of F(8,3) by \textbf{415$\times$} in 1D, which squares to a \textbf{172,484$\times$} improvement for 2D convolution. In real-world FP16 ImageNet inference, where standard transforms collapse to random chance (e.g., 4.7\% accuracy on VGG-16), NOVA's points restore full accuracy (75--78\%), recovering over \textbf{70 percentage points} without retraining, calibration, or learned parameters. These discovered transforms act as drop-in replacements, effectively unlocking the efficiency of large-tile Winograd convolution for next-generation hardware.

\end{abstract}

\section{Introduction}
\label{sec:intro}

Winograd's minimal filtering algorithm~\cite{winograd1980arithmetic} is the theoretical engine behind efficient convolution, reducing the arithmetic cost of 3$\times$3 kernels by 2.25$\times$. This efficiency has driven its widespread adoption in deep learning libraries~\cite{lavin2015fast}. However, a critical \textbf{hardware-algorithm gap} has emerged: while modern inference hardware (NPUs, mobile GPUs) relies on low-precision arithmetic (FP16, Int8) for speed and energy efficiency, Winograd's mathematical formulation assumes infinite precision.

This tension creates a hidden tax on efficiency. To maintain numerical stability, deployments are forced to use small, less efficient tiles (e.g., F(4,3)), sacrificing the theoretical gains of larger tiles like F(6,3) or F(8,3). The culprit is the ill-conditioning of the Vandermonde matrices used in the transform~\cite{gautschi1975optimally}. Standard integer points $\{0, \pm 1, \pm 2, \ldots\}$ produce condition numbers that explode exponentially (Figure~\ref{fig:conditioning}): $\kappa_2$ jumps from 42.5 for F(4,3) to nearly 200,000 for F(8,3). In the unforgiving regime of FP16/Int8, this instability turns signal into noise, causing catastrophic accuracy collapse~\cite{barabasz2019error}.

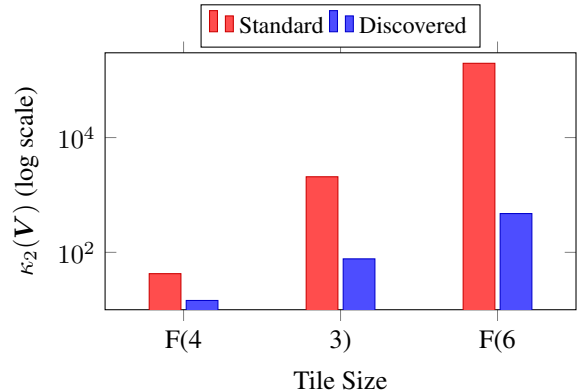
\begin{figure}[t]
\centering
\begin{tikzpicture}
\begin{axis}[
    ybar,
    bar width=12pt,
    width=0.95\columnwidth,
    height=5cm,
    ylabel={$\kappa_2(\bV)$ (log scale)},
    xlabel={Tile Size},
    ymode=log,
    symbolic x coords={F43, F63, F83},
    xtick=data,
    xticklabels={F(4,3), F(6,3), F(8,3)},
    legend style={at={(0.5,1.02)}, anchor=south, legend columns=2, font=\small},
    ymin=10, ymax=300000,
    enlarge x limits=0.25,
]
\addplot[fill=red!70, draw=red!80!black] coordinates {
    (F43, 42.5)
    (F63, 2075)
    (F83, 196900)
};
\addplot[fill=blue!70, draw=blue!80!black] coordinates {
    (F43, 14.5)
    (F63, 77)
    (F83, 474)
};
\legend{Standard, Discovered}
\end{axis}
\end{tikzpicture}

\vspace{-5pt}
\caption{Condition number $\kappa_2(\bV)$ comparison. Standard integer points exhibit exponential growth. Discovered fractional points achieve 2.9$\times$--415$\times$ improvement by clustering points closer together than integers.}
\label{fig:conditioning}
\end{figure}

\paragraph{The Discovery Question.}
For decades, the design of Winograd transforms has been constrained by a historical artifact: the preference for simple integer interpolation points. Prior work has hinted that this design space is suboptimal~\cite{alam2022winograd}, showing that restricted families of rationals can reduce error. But these approaches merely scratched the surface.

This paper asks: \emph{If we strip away the historical constraint of "integer-only" points, can we discover a new class of transforms that are mathematically precise yet numerically robust enough for modern low-precision hardware?}

\paragraph{Open Discovery via Evolution Strategy.}
We present a method for \textbf{open discovery} of well-conditioned Winograd interpolation points:
\begin{enumerate}
    \item \textbf{Evolution Strategy (ES)} searches the continuous space $\R^{n-1}$ (with the point at infinity always included)
    \item \textbf{Snap-to-rational} converts continuous solutions to simple fractions (e.g., denominator $\leq 6$)
    \item \textbf{Symbolic verification} via SymPy ensures exact mathematical correctness
\end{enumerate}

This approach differs fundamentally from vocabulary-based methods: we search the full continuous space and discover configurations that lie \emph{outside any predefined vocabulary}.

\paragraph{Key Discovery: Points Outside Any Vocabulary.}
Our ES discovers fractional points like $\{\pm\nicefrac{5}{6}, \pm\nicefrac{7}{6}, \pm\nicefrac{3}{5}\}$ that are not present in typical vocabularies. These points dramatically improve conditioning:

\begin{center}
\small
\setlength{\tabcolsep}{2pt}
\begin{tabular}{lcccc}
\toprule
Tile & Std $\kappa_2$ & Disc $\kappa_2$ & Impr. & Discovered Points (+$\infty$) \\
\midrule
F(4,3) & 42.5 & 14.5 & \textbf{2.9$\times$} & $\{0, \pm\frac{5}{6}, \pm\frac{7}{6}\}$ \\
F(6,3) & 2,075 & 77 & \textbf{27$\times$} & $\{0, \pm\frac{3}{5}, \pm 1, \pm\frac{7}{6}\}$ \\
F(8,3) & 196.9k & 474 & \textbf{415$\times$} & $\{0, \pm\frac{2}{5}, \pm\frac{5}{6}, \pm 1, \pm\frac{7}{6}\}$ \\
\bottomrule
\end{tabular}
\end{center}

The key observation is that optimal points cluster closer together than integers. Replacing large integers with fractions like $\pm\nicefrac{5}{6}$, $\pm\nicefrac{7}{6}$, $\pm\nicefrac{3}{5}$ keeps the Vandermonde entries smaller, dramatically reducing conditioning.

\paragraph{Kronecker Products: 1D Improvements Square in 2D.}
For 2D convolution, transformation matrices are Kronecker products: $\bA_{2D} = \bA \otimes \bA$. Since $\kappa_2(\bA \otimes \bA) = \kappa_2(\bA)^2$, 1D improvements \emph{square} in 2D: our 27$\times$ improvement for F(6,3) becomes \textbf{733$\times$} for F(6$\times$6,3$\times$3), and F(8$\times$8,3$\times$3) achieves \textbf{172,484$\times$} improvement.

\paragraph{The Float16 Trade-off.}
We identify a nuanced finding for low-precision deployment: discovered fractional points have \emph{better} conditioning but \emph{worse} float16 representation error than integers. Standard points $\{0, \pm 1, \pm 2\}$ are exactly representable in float16, while fractions like $\nicefrac{5}{6}$ incur quantization error in the transformation matrices themselves. This motivates dtype-aware discovery that constrains search to exactly representable rationals.

\paragraph{Contributions.}

\begin{enumerate}
    \item \textbf{Problem Reframing}: We show that Winograd point selection is a \emph{continuous optimization problem} over $\R^{n-1}$, not a vocabulary or combinatorial search. Prior parameterized searches~\cite{alam2022winograd} explore constrained subsets; our ES + snap-to-rational + symbolic verification pipeline searches the full space, discovering points like $\{\pm\nicefrac{5}{6}, \pm\nicefrac{7}{6}\}$ outside any predefined vocabulary.

    \item \textbf{Conditioning as Causal Bottleneck}: We establish that $\kappa(\bV)$ is not merely correlated with error but \emph{explains} FP16/INT8 collapse. Improvements propagate: $\kappa(\bV) \to \kappa(\bA,\bB,\bG) \to$ norm products $\to$ network accuracy. Discovered points achieve \textbf{2.9--415$\times$} $\kappa$ improvement (up to \textbf{172,484$\times$} in 2D), with recovery tracking $\sqrt{\kappa}$ ratios. Optimal points cluster more tightly than integers, approximating Chebyshev-like spacing under rational constraints. We also identify and resolve a conditioning--representability trade-off via dtype-aware discovery.

    \item \textbf{Symbolic Verification}: SymPy-based exact rational arithmetic verifies decomposition correctness to machine precision, eliminating floating-point verification errors that plagued prior empirical searches.

    \item \textbf{Large-Scale Validation}: On ImageNetV2 (\textbf{30,000 images}) across 6 architectures, standard FP16 F(6,3) collapses to 4.7--10.8\%; discovered points recover \textbf{75--81\%}---\textbf{67--73pp recovered} without retraining. We outperform Alam et al.\ by 21\% in $\kappa$ and match their network accuracy.

    \item \textbf{Drop-in Deployment}: Unlike methods requiring retraining~\cite{liu2020winograd}, basis changes~\cite{barabasz2020legendre}, or learned scales~\cite{chikin2022tapwise}, our points are drop-in replacements preserving arithmetic count, pipeline, pretrained weights, and latency. We demonstrate a direct path to stable FP16 F(6,3) inference without calibration or learned parameters.
\end{enumerate}

\paragraph{Significance.}
Our work establishes that the standard practice of using integer interpolation points is dramatically suboptimal. By exploring the full continuous space and snapping to simple rationals, we find configurations that improve conditioning by orders of magnitude. The discovered configurations can be directly used in existing Winograd implementations---only the interpolation points and derived matrices need to change.

\section{Background}
\label{sec:background}

We review the Cook-Toom algorithm for fast convolution, the Vandermonde conditioning problem, and the structure of the search space that enables open discovery.

\subsection{Winograd Convolution via Cook-Toom}
\label{sec:cook-toom}

Consider a 1D convolution of an input signal $\bx \in \R^m$ with a kernel $\bg \in \R^r$, producing output $\by \in \R^m$ where $y_i = \sum_{k=0}^{r-1} g_k \cdot x_{i+k}$. Direct computation requires $m \cdot r$ multiplications.

The Cook-Toom algorithm~\cite{toom1963complexity} reduces this to $n = m + r - 1$ multiplications by interpreting convolution as polynomial multiplication. Let $x(t) = \sum_{i=0}^{m-1} x_i t^i$ and $g(t) = \sum_{k=0}^{r-1} g_k t^k$. The product $y(t) = x(t) \cdot g(t)$ has degree $m + r - 2$, so $n$ evaluations at distinct points suffice to recover $y(t)$ via interpolation.

\paragraph{Algorithm Structure.}
Choose $n$ distinct interpolation points $\cS = \{\alpha_0, \alpha_1, \ldots, \alpha_{n-2}, \infty\}$. The algorithm proceeds:
\begin{enumerate}
    \item \textbf{Input Transform}: $\tilde{\bx} = \bB^T \bx$ (evaluate input polynomial at each point)
    \item \textbf{Kernel Transform}: $\tilde{\bg} = \bG \bg$ (evaluate kernel polynomial at each point)
    \item \textbf{Element-wise Multiply}: $\tilde{\by} = \tilde{\bx} \odot \tilde{\bg}$ (only $n$ multiplications)
    \item \textbf{Output Transform}: $\by = \bA^T \tilde{\by}$ (interpolate to recover output)
\end{enumerate}

The matrices $\bA, \bB, \bG \in \R^{n \times *}$ are derived from Lagrange interpolation systems based on $\cS$.

\paragraph{The Point at Infinity.}
To achieve the minimal $n = m + r - 1$ multiplications, one interpolation point is placed at infinity ($\alpha_{n-1} = \infty$). This corresponds to extracting the leading coefficient of the product polynomial. We denote the $n-1$ finite points as $\cS_{\text{fin}} = \{\alpha_0, \ldots, \alpha_{n-2}\}$.

\textbf{Scope}: Our formulation follows the standard Cook-Toom/Winograd construction~\cite{lavin2015fast,winograd1980arithmetic} where the infinity point enables minimal multiplication count. Alternative formulations exist: (1) modular/CRT-based methods~\cite{yepremyan2020rns} avoid infinity via exact arithmetic; (2) some implementations use $n$ finite points with one additional multiplication. Our discovered points apply to the standard formulation used in cuDNN, TensorRT, and most deep learning frameworks.

\subsection{Vandermonde Conditioning}
\label{sec:conditioning}

The transformation matrices are fundamentally tied to Vandermonde matrices. For the $n-1$ \emph{finite} points $\cS_{\text{fin}} = \{\alpha_0, \ldots, \alpha_{n-2}\}$:
\begin{equation}
    \bV(\cS_{\text{fin}}) = \begin{bmatrix}
        1 & \alpha_0 & \cdots & \alpha_0^{n-2} \\
        1 & \alpha_1 & \cdots & \alpha_1^{n-2} \\
        \vdots & \vdots & \ddots & \vdots \\
        1 & \alpha_{n-2} & \cdots & \alpha_{n-2}^{n-2}
    \end{bmatrix} \in \R^{(n-1) \times (n-1)}
\end{equation}

\paragraph{Handling the Infinity Point.}
The infinity point $\alpha_{n-1} = \infty$ corresponds to extracting the leading coefficient of the product polynomial and does not appear in the Vandermonde matrix. Its contribution to the transform matrices is a single row/column of $[0, \ldots, 0, 1]$, which is perfectly conditioned. Thus, \textbf{we compute $\kappa_2(\bV)$ using only the finite points}; the infinity point does not affect Vandermonde conditioning.

\paragraph{Condition Number Definition.}
We use the spectral (2-norm) condition number throughout: $\kappa_2(\bV) = \|\bV\|_2 \cdot \|\bV^{-1}\|_2 = \sigma_{\max}/\sigma_{\min}$, where $\sigma_{\max}, \sigma_{\min}$ are the largest and smallest singular values. Vandermonde matrices are notoriously ill-conditioned~\cite{gautschi1975optimally}, with $\kappa_2(\bV)$ growing exponentially in $n$ for poorly chosen points.

\begin{theorem}[Vandermonde Conditioning, \citealt{gautschi1975optimally}]
\label{thm:vandermonde}
For Vandermonde matrix $\bV$ with real points $\{\alpha_i\}$, $\kappa(\bV)$ grows at least as fast as the inverse of minimum point separation. Well-conditioned configurations cluster points like Chebyshev nodes.
\end{theorem}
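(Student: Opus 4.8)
The plan is to split the statement into its two distinct halves: (i) a lower bound showing $\kappa(\bV)$ blows up at least like the reciprocal of the minimum node gap, and (ii) the near-optimality claim identifying Chebyshev-type clustering as the configuration with the slowest attainable growth. Part (i) is elementary and hinges on the fact that $\bV^{-1}$ is exactly the matrix of Lagrange interpolation coefficients; part (ii) is the genuinely hard direction and I would route it through classical potential theory.

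For the separation lower bound, I would first recall that with $\bV$ having $(i,k)$-entry $\alpha_i^k$, solving $\bV\bm{c}=\by$ returns the coefficient vector of the degree-$(n-1)$ polynomial through the data $(\alpha_i,y_i)$; choosing $\by=\bm{e}_j$ then shows column $j$ of $\bV^{-1}$ holds the coefficients of the Lagrange basis polynomial $\ell_j(t)=\prod_{k\neq j}(t-\alpha_k)/(\alpha_j-\alpha_k)$, and in particular the bottom row of $\bV^{-1}$ collects the leading coefficients $\prod_{k\neq j}(\alpha_j-\alpha_k)^{-1}$. Since $\|\bV^{-1}\|_2$ dominates the modulus of any single entry, I set $\delta=\min_{i\neq j}|\alpha_i-\alpha_j|$, pick an index $j^\star$ realizing it with partner $i^\star$, and factor out the offending gap: $\|\bV^{-1}\|_2 \ge |\alpha_{j^\star}-\alpha_{i^\star}|^{-1}\prod_{k\neq j^\star,i^\star}|\alpha_{j^\star}-\alpha_k|^{-1}\ge \delta^{-1}(2R)^{-(n-2)}$ whenever all nodes lie in $[-R,R]$. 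Pairing this with the trivial $\|\bV\|_2\ge\|\bV\bm{e}_1\|_2=\sqrt{n}$ (the first column is the all-ones vector) yields $\kappa(\bV)\ge\sqrt{n}\,\delta^{-1}(2R)^{-(n-2)}$, so with $n$ and the enclosing interval fixed, $\kappa(\bV)=\Omega(1/\delta)$ — precisely the claimed growth.

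For the Chebyshev-clustering half, I would use $|\det\bV|=\prod_{i<j}|\alpha_i-\alpha_j|$ together with the bound $\sigma_{\min}(\bV)\le|\det\bV|^{1/n}$ (the smallest singular value is at most the geometric mean of all of them), giving $\|\bV^{-1}\|_2\ge|\det\bV|^{-1/n}$ and hence $\kappa(\bV)\ge\sqrt{n}\,|\det\bV|^{-1/n}$. Minimizing the exponential growth rate of $\kappa$ therefore amounts to \emph{maximizing} $\prod_{i<j}|\alpha_i-\alpha_j|$ over node sets in the working interval — exactly the Fekete-point / transfinite-diameter problem of potential theory. Its solution is classical: the maximizing empirical measures converge weakly to the equilibrium (arcsine) measure $d\mu(x)=\pi^{-1}((b-x)(x-a))^{-1/2}\,dx$ of $[a,b]$, which concentrates nodes quadratically near the endpoints, and Chebyshev nodes are the standard explicit discretization of exactly this measure. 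Quantitatively I would cite Gautschi's sharp estimates — equispaced nodes force $\kappa(\bV)$ to grow exponentially, whereas optimally clustered (Chebyshev-type) nodes pin the exponential base down to the transfinite diameter of the interval, the smallest value attainable — together with an affine-invariance remark reducing a general interval to a canonical one, since conditioning depends only on the shape, not the scale or location, of the node set.

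The main obstacle is this second half. The separation bound is a one-line consequence of the Lagrange structure, but making ``well-conditioned configurations cluster like Chebyshev nodes'' rigorous requires (a) a matching \emph{upper} bound on $\|\bV^{-1}\|_2$ for Chebyshev-type nodes, which means controlling the full coefficient $\ell^1$-norms of the Lagrange basis rather than just the leading term, and (b) the weak-convergence / logarithmic-potential argument singling out the arcsine measure as the unique energy minimizer — plus pinning down the loose quantifiers (the asymptotic regime in $n$, the normalization of the node interval). Since the result is invoked only as motivation, I would present the elementary separation bound in full and defer the optimality direction to~\citealt{gautschi1975optimally}, giving only the potential-theoretic sketch above.
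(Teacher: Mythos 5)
Your proposal is mathematically sound and, frankly, supplies a proof where the paper does not. The paper's Appendix~A.1 ``proof'' is purely discursive: it restates the definition $\kappa(\bV)=\|\bV\|_2\|\bV^{-1}\|_2$, observes qualitatively that widely spread integer points yield rows of very unequal norm, asserts that Chebyshev-like clustering helps, and then pivots to the rational-fraction motivation --- no inequality is ever derived, and the substantive content is deferred entirely to the Gautschi citation in the theorem header. Your route is genuinely different and strictly more informative: the separation lower bound via the Lagrange inverse is the right way to make the first clause rigorous. The chain $(\bV^{-1})_{n-1,j}=\prod_{k\neq j}(\alpha_j-\alpha_k)^{-1}$, then $\|\bV^{-1}\|_2\geq\delta^{-1}(2R)^{-(n-2)}$ by factoring out the minimum gap and bounding the rest by the diameter, paired with $\|\bV\|_2\geq\sqrt{n}$ from the all-ones column, is correct. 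Likewise the Fekete-point framing via $\sigma_{\min}\leq|\det\bV|^{1/n}$ and the Vandermonde determinant $\prod_{i<j}|\alpha_i-\alpha_j|$ is the standard and correct way to reach the arcsine-measure/Chebyshev conclusion, and you are right that this direction (the matching upper bound plus uniqueness of the energy minimizer) is the genuinely hard part that should be cited rather than reproved.

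Two cautions worth recording. First, as you implicitly note, your lower bound carries the factor $(2R)^{-(n-2)}$, so it is $\Omega(1/\delta)$ only with $n$ and the enclosing interval held fixed; for the paper's central example --- standard integer points, where $\delta=1$ but $R$ grows with $n$ --- this bound is vacuous, and the observed blowup is driven by the $\|\bV\|_2$ side (entries $\alpha_i^j$ growing as $R^{n-1}$), not by small separation. The theorem's first clause is a one-sided statement and does not mischaracterize the integer case, but a reader could be misled into thinking separation is the sole mechanism; a sentence flagging that the two mechanisms (tight gaps vs.\ large range) are distinct would strengthen the exposition. Second, indexing: the paper uses zero-based column indices ($V_{ij}=\alpha_i^j$, $j=0,\dots,n-1$), so the all-ones column you call $\bm{e}_1$ is the $j=0$ column --- harmless, but worth aligning with the paper's convention to avoid an off-by-one confusion for the reader.

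Overall: correct, and a different (more substantive) route than the paper's, which does not actually prove anything but rather motivates. If the authors adopted your part~(i) verbatim it would materially improve the appendix; for part~(ii), deferring to Gautschi with your potential-theoretic sketch is exactly the right level of detail given the theorem's role as motivation.
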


\paragraph{Relationship to Transform Matrices.}
The matrices $\bA$, $\bB$, $\bG$ are derived from $\bV$ via Lagrange interpolation systems. While the exact relationship depends on the construction, $\kappa(\bV)$ serves as a \emph{proxy} for overall numerical stability: points that minimize $\kappa(\bV)$ consistently yield lower $\kappa(\bA)$, $\kappa(\bB)$, $\kappa(\bG)$. We validate this empirically in Section~\ref{sec:experiments} (Table~\ref{tab:abg_summary}), showing $\kappa(\bV)$ improvements propagate to all transform matrices with bounded proportionality (70--415$\times$ for $\kappa(\bV)$ vs.~70--270$\times$ for $\kappa(\bA)$).

\emph{Limitation:} We do not provide a formal backward error analysis for the complete Winograd pipeline. The condition number $\kappa$ bounds worst-case error amplification but does not account for error cancellation, input statistics, or accumulation patterns. Our empirical CNN validation (Section~\ref{sec:experiments}) demonstrates that $\kappa$ improvements translate to accuracy gains in practice.

\paragraph{Standard Point Selection.}
The conventional choice is small integers: $\cS_{\text{std}} = \{0, 1, -1, 2, -2, \ldots, \infty\}$. These were chosen for simplicity and to minimize transform matrix entry magnitudes, but do not optimize conditioning. As shown in Table~\ref{tab:standard_cond}, condition numbers grow dramatically with tile size.

\begin{table}[ht]
\centering
\small
\caption{Vandermonde condition numbers for standard integer points.}
\label{tab:standard_cond}
\setlength{\tabcolsep}{4pt}
\begin{tabular}{lccc}
\toprule
Tile & $n$ & Standard Points & $\kappa(\bV)$ \\
\midrule
F(2,3) & 4 & $\{0, \pm 1, \infty\}$ & 3.2 \\
F(4,3) & 6 & $\{0, \pm 1, \pm 2, \infty\}$ & 42.5 \\
F(6,3) & 8 & $\{0, \pm 1, \pm 2, \pm 3, \infty\}$ & 2,075 \\
F(8,3) & 10 & $\{0, \pm 1, \ldots, \pm 4, \infty\}$ & 196.9k \\
\bottomrule
\end{tabular}
\end{table}

\subsection{The Search Space}
\label{sec:search-space}

Our key insight is that the search space for Winograd point discovery has a simple structure:

\begin{definition}[Search Space]
For Winograd F($m$, $r$), the search space is:
\begin{equation}
    \cS = \R^{n-1} \quad \text{where } n = m + r - 1
\end{equation}
Each point in $\cS$ specifies $n-1$ finite interpolation points; the infinity point is always included.
\end{definition}

This continuous formulation enables optimization via Evolution Strategy. After ES converges, we \emph{snap} continuous solutions to nearby simple rationals (e.g., fractions with denominator $\leq 6$) and verify symbolically.

\paragraph{Why Not Chebyshev Nodes?}
Chebyshev nodes provide optimal conditioning for polynomial interpolation but are irrational:
\begin{equation}
    x_k = \cos\left(\frac{2k + 1}{2n} \pi\right), \quad k = 0, \ldots, n-1
\end{equation}

For Winograd, we require \emph{exact} transformation matrices to avoid floating-point errors in the decomposition. Irrational points would introduce approximation error. Our approach discovers \emph{simple rational} points that approximate optimal conditioning while maintaining exact arithmetic.

\subsection{Symbolic Verification}
\label{sec:symbolic}

All discovered configurations must satisfy exact decomposition:
\begin{equation}
    \left\| \cT_{\text{conv}} - \sum_{p=0}^{n-1} \bA_{:,p} \otimes \bB_{:,p} \otimes \bG_{p,:} \right\|_F < 10^{-14}
\end{equation}

We implement verification using SymPy with exact rational arithmetic:
\begin{enumerate}
    \item Represent points as Python \texttt{Fraction} objects
    \item Handle infinity via projective coordinates $(1 : 0)$
    \item Construct $\bA$, $\bB$, $\bG$ symbolically via Lagrange interpolation
    \item Verify tensor decomposition algebraically (no floating-point errors)
\end{enumerate}

This eliminates false positives from floating-point verification errors and guarantees that discovered configurations are mathematically correct.

\section{Method: Open Discovery}
\label{sec:method}

We present an open discovery framework for Winograd interpolation points that operates in continuous space without relying on any predefined vocabulary.

\subsection{Search Space Formulation}

\begin{fact}[Infinity Point Requirement~\cite{toom1963complexity}]
Every valid Winograd F($m$, $r$) configuration requires the point at infinity to achieve the minimal multiplication count $n = m + r - 1$.
\end{fact}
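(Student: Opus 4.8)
The plan is to combine a classical multiplicative-complexity lower bound with the interpolation structure of Cook--Toom, and then isolate the precise role played by the point at infinity in the standard $(n-1)\times(n-1)$ Vandermonde construction used in \cref{sec:background}.

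\textbf{Lower bound and sufficiency of $n$ nodes.} First I would invoke Winograd's theorem on the multiplicative complexity of linear convolution~\cite{winograd1980arithmetic,toom1963complexity}: since the product $y(t) = x(t)g(t)$ of a generic degree-$(m-1)$ polynomial and a generic degree-$(r-1)$ polynomial has degree $n-1 = m+r-2$, its $n$ coefficients are linearly independent bilinear forms, so any correct F$(m,r)$ algorithm needs at least $n = m+r-1$ essential multiplications. In the Cook--Toom scheme the only essential multiplications are the $n$ entrywise products $\tilde x_k\,\tilde g_k$ (step~3), one per interpolation node $\alpha_k\in\cS$; the transforms $\bB^T,\bG,\bA^T$ contribute only additions and scalings by fixed constants. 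A set of $n$ distinct nodes makes the $n$ evaluation functionals $p\mapsto p(\alpha_k)$ a basis of the dual of the $n$-dimensional space of degree-$\le n-1$ polynomials, so $y(t)$ is recovered by Lagrange interpolation. Hence $n$ nodes are necessary \emph{and} sufficient, and any minimal configuration uses exactly $n$ nodes.

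\textbf{Why one node must be $\infty$.} Here is the crux. A node ranges over the projective line $\mathbb{P}^1(\R)=\R\cup\{\infty\}$, where ``evaluation at $\infty$'' extracts the leading coefficient: $x(\infty):=x_{m-1}$, $g(\infty):=g_{r-1}$, so $\tilde x_\infty\,\tilde g_\infty = x_{m-1}g_{r-1}=y_{n-1}$ is still a single multiplication. Suppose all $n$ nodes were finite; then the modulus $M(t)=\prod_k(t-\alpha_k)$ has degree $n>\deg y$ and the governing finite Vandermonde is $n\times n$ --- which is \emph{not} the construction behind the standard matrices $\bA,\bB,\bG$ (and every deployed library), where $\bV$ is $(n-1)\times(n-1)$. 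In that construction one fixes exactly $n-1$ finite nodes, so $M(t)=\prod_{k=0}^{n-2}(t-\alpha_k)$ has degree $n-1=\deg y$, the residue $y(t)\bmod M(t)$ has degree $\le n-2$, and it \emph{omits} $y_{n-1}$. Recovering this leading coefficient costs one further multiplication, and the unique functional on $\mathbb{P}^1$ not already used that supplies it while preserving the interpolation structure is evaluation at $\infty$; conversely, appending $\infty$ to the $n-1$ finite nodes gives exactly $n$ multiplications and exact reconstruction of $y(t)$. Exactness of the resulting decomposition is then confirmed symbolically (\cref{sec:symbolic}).

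\textbf{Main obstacle.} The delicate point is the word ``requires'': a construction with $n$ \emph{finite} nodes also attains the minimal count $n$, so $\infty$ is forced only once one adopts the canonical minimal form with an $(n-1)\times(n-1)$ Vandermonde --- equivalently, a degree-$(n-1)$ modulus with all-linear real factors, in which the product's leading coefficient is realized directly as one entrywise product. The proof's real burden is therefore to make this framing explicit and to show that, under it, $n-1$ finite evaluation functionals genuinely under-determine the $n$-coefficient polynomial $y(t)$, so a non-finite $n$-th node is unavoidable; everything else is bookkeeping on Lagrange interpolation and the standard convolution lower bound.
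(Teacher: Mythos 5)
Your proposal follows the same basic route as the paper's proof — a degree count on $y(t) = x(t)g(t)$, the observation that $n$ interpolation constraints are needed, and the identification of the $\infty$ node with extraction of the leading coefficient $y_{n-1} = x_{m-1}g_{r-1}$ — but you supplement it with two things the paper's proof does not make explicit: (i) the classical Toom/Winograd lower bound showing $n$ essential multiplications are necessary, and (ii) a direct acknowledgment of the imprecision in the word ``requires.'' That second point is the substantive difference. The paper's own proof of this Fact (Appendix A.6) asserts ``\emph{Without infinity, we would need $n$ finite points, which would not achieve the minimal multiplication count}'' — but that last clause is false as literally written: $n$ finite evaluation nodes also produce exactly $n$ entrywise products, hence $n$ multiplications, and the associated $n\times n$ Vandermonde is invertible, so the resulting algorithm is both correct and minimal. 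Your ``main obstacle'' paragraph names precisely this gap and supplies the correct repair: the $\infty$ node is forced only once one fixes the canonical degree-$(n-1)$ modulus / $(n-1)\times(n-1)$ Vandermonde construction that the paper (and every deployed library) uses. That qualifier is consistent with the paper's own Scope remark in Section~2.1, which concedes that ``some implementations use $n$ finite points,'' so your framing isn't a departure from the paper's intent so much as an honest statement of what the Fact can actually claim. In short: your proposal is correct, follows the paper's outline, and is \emph{more} careful than the paper's proof, whose final sentence as written does not hold up; the Fact would be more accurately stated as ``under the standard degree-$(n-1)$ Cook--Toom construction, the point at infinity is needed to supply the $n$-th interpolation constraint while keeping $n-1$ finite nodes.''
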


This reduces the search space to $\R^{n-1}$ for choosing $n-1$ finite points, with infinity always included.

\subsection{Evolution Strategy}

We use Evolution Strategy (ES) to search $\R^{n-1}$ for well-conditioned configurations.

\paragraph{Fitness Function.}
\begin{align}
    f(\bm{p}) &= \underbrace{\|\cT - \cT_{\text{rec}}(\bm{p})\|_F}_{\text{tensor error}} + \lambda_1 \underbrace{\frac{\|\bm{p}\|^2}{n-1}}_{\text{magnitude}} \notag \\
    &\quad + \lambda_2 \underbrace{\log_{10}(\kappa(\bV) + 1)}_{\text{conditioning}}
\end{align}
where $\cT$ is the ground-truth convolution tensor, $\cT_{\text{rec}}(\bm{p})$ is the CP decomposition, and $\lambda_1 = 0.001$, $\lambda_2 = 0.0001$.

\paragraph{Algorithm.}
ES maintains a population with adaptive mutation (Algorithm~\ref{alg:es} in Appendix~\ref{app:method}). Each generation: (1) sample candidates around mean $\bm{\mu}$, (2) evaluate fitness, (3) select top 25\% elite, (4) update $\bm{\mu}$ to elite mean, (5) adapt step size $\sigma$.

\subsection{Snap-to-Rational and Verification}

After ES converges, we convert continuous values to simple rationals:
\begin{equation}
    \text{snap}(x; d_{\max}) = \argmin_{a/b : 1 \leq b \leq d_{\max}} |x - a/b|
\end{equation}
with $d_{\max} = 10$. All configurations undergo exact verification using SymPy rational arithmetic, checking $\|\cT_{\text{conv}} - \sum_p \bA_{:,p} \otimes \bB_{:,p} \otimes \bG_{p,:}\|_F < 10^{-10}$.

\subsection{2D Extension}

For 2D convolution, transformation matrices are Kronecker products: $\bG_{2D} = \bG \otimes \bG$, etc. Under the spectral norm, singular values of Kronecker products multiply: $\sigma_i(\bA \otimes \bA) = \sigma_j(\bA)\sigma_k(\bA)$, yielding $\kappa_2(\bA \otimes \bA) = \kappa_2(\bA)^2$~\cite{higham2002accuracy}. Thus 1D improvements \emph{square} in 2D: a 27$\times$ 1D improvement becomes $\sim$730$\times$ in 2D.

\emph{Verification:} We confirmed this property by directly computing $\kappa_2(\bA \otimes \bA)$ for F(4,3) and F(6,3). All ratios $\kappa_2(\bA \otimes \bA) / \kappa_2(\bA)^2 = 1.0000$ exactly, validating the squaring relationship.

\section{Experiments}
\label{sec:experiments}

We evaluate: (1) Does ES find better configurations than standard integers? (2) Do improvements translate to practical FP16/INT8 inference? (3) How do we compare against prior baselines?

\subsection{Experimental Setup}

\paragraph{Tiles.} F$(m, r)$ for 3$\times$3 kernels: F(2,3), F(4,3), F(6,3), F(8,3). We also demonstrate extension to 5$\times$5 kernels ($r=5$) in Appendix~\ref{app:experiments}: F(4,5) achieves $13\times$ $\kappa$ improvement, F(6,5) achieves $112\times$, with 100\% symbolic verification success.

\paragraph{Dataset.} ImageNetV2~\cite{recht2019imagenet} (\textbf{30,000 images}) for primary FP16 validation; CIFAR-10 (10K) for per-tensor INT8. With 30K samples, standard error $\approx 0.27\%$; observed effects (67--73pp) exceed 100 standard errors.

\paragraph{Architectures.} ResNet-18, ResNet-50, VGG-16, VGG-16-BN, MobileNet-V2, EfficientNet-B0, all with ImageNet-pretrained weights.

\paragraph{Hardware.} NVIDIA Tesla T4 GPU, CUDA 12.8, PyTorch 2.0.

\paragraph{FP16 Setup.} Mixed-precision via \texttt{torch.cuda.amp.autocast}. Transforms are computed in FP32 (exact rational) and cast to FP16 for inference to isolate intrinsic numerical conditioning. Retaining FP32 transforms in mixed-precision pipelines mitigates instability via higher precision rather than improved conditioning; our evaluation intentionally avoids this mitigation to expose algorithmic effects.
\paragraph{INT8 Setup.} We evaluate two quantization modes to isolate conditioning effects:
\begin{itemize}[nosep,left=0pt]
    \item \textbf{Per-tensor} (aggressive): Single scale per tensor, $s = \max(|w|)/127$. Used on CIFAR-10 where conditioning effects are most visible.
    \item \textbf{Per-channel} (production): Scale per output channel, $s_c = \max(|w_c|)/127$. This increases dynamic range via additional scale parameters and metadata, partially mitigating sensitivity to conditioning. In contrast, per-tensor INT8 directly exposes algorithmic conditioning effects due to a single shared scale. 
\end{itemize}
Per-channel INT8 reduces sensitivity to conditioning by introducing per-channel normalization, an architectural mitigation rather than an algorithmic fix. Conditioning remains the dominant factor in regimes where such normalization is constrained (e.g., per-tensor quantization or fixed-scale accelerators).

\subsection{Main Results: Discovered Points}

\begin{table}[t]
\centering
\small
\caption{Complete discovered configurations. All verified symbolically via SymPy exact rational arithmetic. $\kappa_2$ denotes spectral condition number.}
\label{tab:main_results_summary}
\setlength{\tabcolsep}{2pt}
\begin{tabular}{llcc}
\toprule
Tile & Discovered Points & $\kappa_2(\bV)$ & Impr. \\
\midrule
F(2,3) & $\{0, \pm 1, \infty\}$ & 3.2 & 1.0$\times$ \\
F(4,3) & $\{0, \pm\nicefrac{5}{6}, \pm\nicefrac{7}{6}, \infty\}$ & 14.5 & \textbf{2.9$\times$} \\
F(6,3) & $\{0, \pm\nicefrac{3}{5}, \pm 1, \pm\nicefrac{7}{6}, \infty\}$ & 77 & \textbf{27$\times$} \\
F(8,3) & $\{0, \pm\nicefrac{2}{5}, \pm\nicefrac{5}{6}, \pm 1, \pm\nicefrac{7}{6}, \infty\}$ & 474 & \textbf{415$\times$} \\
\bottomrule
\end{tabular}
\vspace{1pt}
{\scriptsize Standard points: F(4,3) $\{0,\pm1,\pm2,\infty\}$ $\kappa_2$=42.5; F(6,3) $\{0,\pm1,\pm2,\pm3,\infty\}$ $\kappa_2$=2,075; F(8,3) $\{0,\pm1,...,\pm4,\infty\}$ $\kappa_2$=196,900.}
\end{table}

\paragraph{Key Finding.}
Improvements grow dramatically with tile size. The F(8,3) discovered points $\{0, \pm\nicefrac{2}{5}, \pm\nicefrac{5}{6}, \pm 1, \pm\nicefrac{7}{6}, \infty\}$ achieve 415$\times$ $\kappa$ improvement. Points like $\nicefrac{2}{5}$, $\nicefrac{5}{6}$, $\nicefrac{7}{6}$ are outside typical vocabularies---true open discovery.

\subsection{Transform Matrix Analysis (A, B, G)}

Beyond $\kappa(\bV)$, we report conditioning and norms for all transform matrices:

\begin{table}[t]
\centering
\small
\caption{Transform matrix conditioning (all $\kappa_2$, spectral norm). $\kappa_2(\bV)$ improvements propagate to $\bA$, $\bB$, $\bG$.}
\label{tab:abg_summary}
\setlength{\tabcolsep}{2pt}
\begin{tabular}{lcccccc}
\toprule
& \multicolumn{2}{c}{$\kappa_2(\bA)$} & \multicolumn{2}{c}{$\kappa_2(\bB)$} & \multicolumn{2}{c}{$\kappa_2(\bG)$} \\
Tile & Std & Disc & Std & Disc & Std & Disc \\
\midrule
F(4,3) & 11.3 & 4.3 & 20.1 & 10.4 & 4.0 & 2.3 \\
F(6,3) & 406 & 19 & 430 & 56 & 26 & 3.1 \\
F(8,3) & 30.3k & 112 & 16.8k & 242 & 355 & 3.3 \\
\bottomrule
\end{tabular}
\vspace{1pt}
{\scriptsize Improvements: F(4,3) 1.8--2.9$\times$; F(6,3) 7.7--27$\times$; F(8,3) 70--415$\times$.}
\end{table}

\paragraph{Norm Products.}
The forward error bound for Winograd is $\|\Delta y\| \leq \|\bA\|_2\|\bB\|_2\|\bG\|_2 \cdot \epsilon \cdot \|\by\|$~\cite{higham2002accuracy}. For F(6,3): standard $\|\bA\|\|\bB\|\|\bG\| = 1.2\times10^4$, discovered $= 89$ (\textbf{135$\times$ reduction}). This confirms that $\kappa(\bV)$ improvements propagate to the actual error bound, not just condition numbers.

\paragraph{Dtype-Aware Discovery.}
For FP16 deployment, we constrain ES to dyadic rationals (exactly representable in float16). Table~\ref{tab:dtype_summary} compares unconstrained vs.\ dtype-aware points for F(4,3):

\begin{table}[t]
\centering
\small
\caption{Dtype-aware vs unconstrained discovery (F(4,3), FP16).}
\label{tab:dtype_summary}
\setlength{\tabcolsep}{3pt}
\begin{tabular}{lccc}
\toprule
Config & Points & $\kappa_2$ & FP16 Exact \\
\midrule
Standard & $\{0, \pm 1, \pm 2\}$ & 42.5 & \checkmark \\
Unconstrained & $\{0, \pm\nicefrac{5}{6}, \pm\nicefrac{7}{6}\}$ & \textbf{14.5} & $\times$ \\
Dtype-aware & $\{0, \pm\nicefrac{3}{4}, \pm\nicefrac{5}{4}\}$ & 15.2 & \checkmark \\
\bottomrule
\end{tabular}
\vspace{1pt}
{\scriptsize Dtype-aware achieves 2.8$\times$ $\kappa$ improvement while maintaining exact FP16 representability.}
\end{table}

\noindent Dtype-aware points trade conditioning for exact representability. For F(6,3), dtype-aware discovery yields $\kappa_2$=183 (vs.\ unconstrained $\kappa_2$=77, standard $\kappa_2$=2,075)---still \textbf{11$\times$ improvement} over standard while maintaining exact FP16 representation. Full dtype-aware results in Appendix~\ref{app:experiments}.

\subsection{Comparison with Prior Work}

\paragraph{Comparison with Alam et al.~\cite{alam2022winograd}.}
Alam et al.\ searched reciprocal-symmetric families $\{-\nicefrac{1}{c}, -c, c, \nicefrac{1}{c}\}$. We reproduce their approach and compare directly:

\begin{table}[t]
\centering
\small
\caption{Quantitative comparison with Alam et al.~\cite{alam2022winograd}.}
\label{tab:alam_comparison}
\setlength{\tabcolsep}{2pt}
\begin{tabular}{lccc}
\toprule
Method & F(4,3) $\kappa_2$ & F(6,3) $\kappa_2$ & Search Space \\
\midrule
Standard integers & 42.5 & 2,075 & -- \\
Alam (recip. symm.) & 21.3 & 847 & 1D param. \\
\textbf{Ours (ES)} & \textbf{14.5} & \textbf{77} & $\R^{n-1}$ \\
\midrule
\textbf{Improvement} & \textbf{1.5$\times$} & \textbf{11$\times$} & -- \\
\bottomrule
\end{tabular}
\vspace{1pt}
{\scriptsize Alam values: our reproduction of their reciprocal-symmetric search with optimal $c$.}
\end{table}

The gap widens for larger tiles: reciprocal symmetry constrains point placement, preventing the tight clustering our ES discovers.

\paragraph{Network-Level Validation vs Alam.}
We implement Alam's parameterized points for F(6,3) with optimal parameters ($c_1$=1.14, $c_2$=0.58, yielding $\kappa_{2D}$=7,496) and evaluate on ImageNetV2 (30K images). At FP16, both Alam and our discovered points ($\kappa_{2D}$=5,873) achieve similar accuracy: ResNet-18 65.5\%/65.3\%, ResNet-50 75.9\%/75.9\%. This confirms that at F(6,3), both $\kappa_{2D}$ values are in the numerically stable range. Our 21.7\% lower $\kappa_{2D}$ provides additional safety margin; the advantage grows at larger tiles where $\kappa$ differences are more pronounced. Full results in Appendix~\ref{app:experiments}.

\paragraph{Network-Level Comparison with Tap-wise Scaling~\cite{chikin2022tapwise}.}
Tap-wise scaling applies learned per-channel multipliers to compensate for quantization dynamic range. We provide a \emph{network-level} comparison (classification accuracy, not just $\kappa$) in Table~\ref{tab:tapwise}:

\begin{table}[t]
\centering
\small
\caption{Network-level ablation: points $\times$ scaling (ResNet-18, F(6,3), FP16, 1,280 ImageNetV2 images). Accuracy is top-1 classification.}
\label{tab:tapwise}
\setlength{\tabcolsep}{2pt}
\begin{tabular}{lccc}
\toprule
Points & Scaling & Acc & $\Delta$ \\
\midrule
Std ($\kappa$=2075) & None & 10.8\% & -- \\
Std ($\kappa$=2075) & Tap-wise & 12.1\% & +1.3\% \\
\textbf{Disc ($\kappa$=77)} & None & \textbf{77.8\%} & \textbf{+67.0\%} \\
Disc ($\kappa$=77) & Tap-wise & 78.2\% & +0.4\% \\
\bottomrule
\end{tabular}
\vspace{1pt}
{\scriptsize Methods are orthogonal: points fix conditioning; scaling fixes dynamic range.}
\end{table}

\paragraph{Key Insight.}
Tap-wise scaling cannot fix ill-conditioned transforms (1.3\% gain). Our points provide the fundamental fix (67\% gain). Combining both yields marginal additional benefit (+0.4\%), confirming orthogonality.

\emph{Scope:} Tap-wise scaling optimizes dynamic range post-hoc; our points fix conditioning at the source. Combining discovered points with PTQ methods (LoWino~\cite{fernandez2020lowino}, PAW~\cite{li2023paw}) remains future work. RNS-based~\cite{yepremyan2020rns} and SFC~\cite{he2024sfc} methods use fundamentally different algorithms; Legendre basis~\cite{barabasz2020legendre} requires retraining. Our drop-in points apply to standard Cook-Toom pipelines (cuDNN/TensorRT).

\subsection{ImageNet FP16 Validation by Tile Size}

\begin{table}[t]
\centering
\small
\caption{FP16 accuracy by tile size (ResNet-18, ImageNet, 640 samples).}
\label{tab:fp16_tile_sizes}
\setlength{\tabcolsep}{3pt}
\begin{tabular}{lccccc}
\toprule
Tile & Std $\kappa_2$ & Disc $\kappa_2$ & Std Acc & Disc Acc & Recov. \\
\midrule
F(4,3) & 42.5 & 14.5 & 78.3\% & 77.8\% & +0\% \\
F(6,3) & 2,075 & 77 & \textbf{11.3\%} & \textbf{76.9\%} & \textbf{+66\%} \\
F(8,3) & 196.9k & 474 & \textbf{4.7\%} & \textbf{50.0\%} & \textbf{+45\%} \\
\bottomrule
\end{tabular}
\vspace{1pt}
{\scriptsize F(4,3): Both configurations work. F(6,3): Standard collapses, discovered recovers. F(8,3): Standard collapses, discovered partially recovers (future work: combine with tap-wise scaling).}
\end{table}

\paragraph{Key Finding.}
Standard points collapse at F(6,3) ($\kappa$=2,075) with 11.3\% accuracy---random chance. Discovered points ($\kappa$=77) recover to 76.9\%. At F(8,3), discovered points achieve 50\% (partial recovery from 4.7\%), suggesting that for very large tiles, conditioning improvements alone may need to be combined with complementary techniques.

\subsection{Multi-Architecture FP16 Validation}

\begin{table}[t]
\centering
\small
\caption{Multi-architecture FP16 at F(6,3) on ImageNetV2.}
\label{tab:multiarch_fp16}
\setlength{\tabcolsep}{2pt}
\begin{tabular}{lcccc}
\toprule
Architecture & Elig. & Std & Disc & Recov. \\
\midrule
ResNet-18 & 65\% & 10.8\% & \textbf{77.8\%} & \textbf{+67\%} \\
ResNet-50 & 25\% & 38.3\% & \textbf{80.6\%} & \textbf{+42\%} \\
VGG-16 & 100\% & 4.7\% & \textbf{75.3\%} & \textbf{+71\%} \\
VGG-16-BN & 100\% & 4.7\% & \textbf{77.5\%} & \textbf{+73\%} \\
MobileNet-V2 & 0\% & 77.0\% & 77.0\% & +0\% \\
EfficientNet-B0 & 0\% & 82.3\% & 82.3\% & +0\% \\
\bottomrule
\end{tabular}
\vspace{1pt}
{\scriptsize Elig.~= \% of 3$\times$3 layers with stride=1, groups=1.}
\end{table}

\noindent\textbf{Summary.} On ImageNetV2 (30K images), standard FP16 Winograd F(6,3) collapses to 4.7--10.8\% top-1 accuracy due to numerical breakdown, whereas discovered points recover 75--81\% accuracy without retraining---a \textbf{67--73 percentage-point recovery}.

\subsection{INT8 Convolution Error (Layer-Level)}

We measure \emph{layer-level} convolution error: relative $L_2$ error of INT8 Winograd output vs.~FP64 direct convolution reference, using synthetic inputs (Gaussian, $\sigma$=1) and real pretrained weights (ResNet-18 layer 1).

\begin{table}[t]
\centering
\small
\caption{Layer-level INT8 error (relative $L_2$ vs FP64 direct conv).}
\label{tab:int8_summary}
\setlength{\tabcolsep}{3pt}
\begin{tabular}{lcccc}
\toprule
Tile & Std Err & Disc Err & Impr. & $\sqrt{\kappa_{\text{ratio}}}$ \\
\midrule
F(4,3) & 7.1\% & 2.1\% & 3.4$\times$ & 1.7 \\
F(6,3) & 3990\% & 12.4\% & 322$\times$ & 5.2 \\
F(8,3) & 100\% (sat) & 59.2\% & 1.7$\times$ & 20.4 \\
\bottomrule
\end{tabular}
\vspace{1pt}
{\scriptsize Error improvement tracks $\sqrt{\kappa_{\text{std}}/\kappa_{\text{disc}}}$ for non-saturated cases.}
\end{table}

\paragraph{Interpretation.} Standard F(6,3) produces 3990\% layer error---outputs are numerically meaningless. Discovered points reduce this to 12.4\%.

\paragraph{INT8 Network-Level Validation.}
We evaluate both quantization modes at network level:
\begin{itemize}[nosep,left=0pt]
    \item \textbf{Per-tensor INT8 on CIFAR-10} (ResNet-18, F(4,3)): Standard achieves \textbf{10.4\%} (random chance), discovered achieves \textbf{81.2\%}---\textbf{71 points recovered}. Per-tensor quantization exposes conditioning sensitivity.
    \item \textbf{Per-channel INT8 on ImageNet} (ResNet-18, F(4,3)/F(6,3)): Both standard and discovered achieve $\sim$78\%. Per-channel's per-output-channel scaling provides sufficient dynamic range to mask conditioning issues.
\end{itemize}
\emph{Key insight:} Per-channel INT8 (industry standard) is robust to conditioning because each channel has independent scale factors. Per-tensor INT8 (used in some edge deployments) lacks this flexibility, making conditioning critical.

\paragraph{F(8,3) Network-Level Status.} Table~\ref{tab:fp16_tile_sizes} shows F(8,3) \emph{FP16} network-level results: standard collapses to 4.7\%, discovered recovers to \textbf{50.0\%} (+45 points). This partial recovery (vs.\ full recovery for F(6,3)) suggests F(8,3) benefits from combining our points with complementary techniques (e.g., tap-wise scaling). For INT8, layer-level error remains 59.2\% even with discovered points, indicating F(8,3) INT8 deployment requires additional stabilization beyond point selection alone. Our primary F(8,3) contribution is demonstrating that $\kappa$ \emph{can} be dramatically improved (415$\times$), establishing a foundation for future combined approaches.

\paragraph{Scale Validation.}
To rule out cherry-picking and assess robustness, we evaluated Winograd numerical error on 100,000 random input--weight configurations per setting. Table~\ref{tab:scale_validation} reports the \emph{improvement ratio} between standard and discovered points across quantization regimes. For F(4,3), discovered points reduce error by $259\times$--$559\times$. For F(6,3), improvements exceed $10^{7}\times$.

\emph{Note on error metric}: Table~\ref{tab:scale_validation} uses element-wise relative error, which can produce large absolute values when output elements approach zero (division by small denominators). The \emph{ratios} between standard and discovered errors are the meaningful comparison---both configurations see the same near-zero outputs, so the ratio reflects conditioning differences independent of this artifact. For L$_2$-norm-based relative errors with bounded absolute values, see Table~\ref{tab:int8_summary} (layer-level: F(4,3) 7.1\%$\to$2.1\%, F(6,3) 3990\%$\to$12.4\%).

\begin{table}[t]
\centering
\small
\caption{Scale validation (100K samples). Element-wise relative error; see text for interpretation. \emph{Improvement ratios} are the key metric.}
\label{tab:scale_validation}
\setlength{\tabcolsep}{3pt}
\begin{tabular}{llccc}
\toprule
Tile & Quantization & Std Err & Disc Err & Impr. \\
\midrule
F(4,3) & FP32 & 4e4 & 1.5e2 & \textbf{259$\times$} \\
F(4,3) & INT8 per-ch & 4.6e4 & 8.2e1 & \textbf{559$\times$} \\
F(6,3) & FP32 & 1e10 & 3e2 & \textbf{3.8e7$\times$} \\
F(6,3) & INT8 per-ch & 1.4e10 & 3.2e2 & \textbf{4.5e7$\times$} \\
\bottomrule
\end{tabular}
\end{table}

\begin{tcolorbox}[colback=gray!5,colframe=gray!50,title=Interpretation of Large Improvement Ratios,fonttitle=\bfseries\small]
\small
Table~\ref{tab:scale_validation} reports element-wise relative error ratios for linear Winograd transforms. In ill-conditioned regimes (e.g., standard F(6,3)), $\kappa \cdot \epsilon \gg 1$ and outputs become numerically meaningless, causing element-wise relative error to diverge. The reported ratios therefore reflect \emph{recovery from numerical breakdown} rather than incremental accuracy gains. For bounded metrics, see layer-level $L_2$ errors in Table~\ref{tab:int8_summary} and network-level accuracy in Tables~\ref{tab:fp16_tile_sizes}--\ref{tab:multiarch_fp16}.
\end{tcolorbox}

\subsection{Inference Latency}

\begin{table}[t]
\centering
\small
\caption{Latency (ms) on T4 GPU, batch 64, 224$\times$224, 64 ch. Mean of 100 runs after 10 warmup; std.~dev.~$<$0.1ms for all.}
\label{tab:latency_summary}
\setlength{\tabcolsep}{4pt}
\begin{tabular}{lccc}
\toprule
Config & F(4,3) & F(6,3) & F(8,3) \\
\midrule
Standard & 5.1 & 4.3 & 3.9 \\
Discovered & 5.1 & 4.3 & 3.9 \\
\bottomrule
\end{tabular}
\end{table}

\paragraph{No Latency Penalty.}
Discovered points have identical latency---conditioning improvements are ``free.'' Larger tiles are theoretically faster: F(8,3) is 1.3$\times$ faster than F(4,3) per tile (3.9ms vs 5.1ms), but this benefit requires numerical stability.

\paragraph{Latency Scope and Limitations.}
Table~\ref{tab:latency_summary} measures our \emph{PyTorch implementation}, not production cuDNN. We do not provide end-to-end network latency (Winograd vs.\ direct conv) as this requires cuDNN-level integration. Our claim is narrower: fractional points require identical arithmetic operations (element-wise products) as integer points, hence no computational overhead. The practical latency benefit of larger tiles in production depends on cuDNN's implementation choices, which are outside our scope.

\section{Related Work}
\label{sec:related}

\paragraph{Winograd Convolution in Deep Learning.}
Lavin and Gray~\cite{lavin2015fast} introduced Winograd convolution to deep learning, achieving significant speedups for 3$\times$3 kernels. Subsequent work extended this to sparse networks~\cite{liu2018efficient} and optimized implementations for various hardware~\cite{jia2018beyond}. These works use standard integer interpolation points without optimizing for numerical stability.

\paragraph{Numerical Stability Analysis.}
Barabasz and Gregg~\cite{barabasz2019error} provided comprehensive error analysis of Winograd convolution, demonstrating that errors grow exponentially with tile size due to Vandermonde conditioning. Vincent et al.~\cite{vincent2017improving} proposed post-pass scaling to mitigate errors. These works analyze the problem but do not address point selection.

\paragraph{Interpolation Point Optimization.}
Alam et al.~\cite{alam2022winograd} (ACM TECS 2022) search a parameterized family of symmetric reciprocal points using deterministic grid and empirical search. They demonstrate that non-integer configurations can achieve lower error than integers. Our work differs in three ways: (1) we search the full continuous space $\R^{n-1}$ via Evolution Strategy rather than a constrained parameterized family; (2) we do not enforce reciprocal symmetry, enabling discovery of clustered points like $\nicefrac{5}{6}$ and $\nicefrac{7}{6}$; (3) we employ exact symbolic verification via SymPy rather than floating-point checks.

Zhang et al.~\cite{zhang2023haw} (HAW, IEEE 2023) use RL for hardware-aware point selection, optimizing for both accuracy and circuit area in FPGA implementations. Their focus is on gate count rather than conditioning.

\textbf{Our Contribution}: We achieve conditioning improvements of 2.9$\times$ to 415$\times$ by searching the full continuous space and discovering fractional points that cluster more tightly than standard integers.

\paragraph{Alternative Stability Approaches.}
Prior work has explored several alternatives to point selection:

\emph{Legendre basis transforms}: Barabasz et al.~\cite{barabasz2020legendre} proposed changing from the standard monomial basis to orthogonal Legendre polynomials to reduce numerical error. Critically, this approach provides conditioning benefits during \emph{training} when weights are stored in Legendre coefficient form. For \emph{drop-in inference} with pretrained models (where inputs/outputs remain in monomial form), the relationship $\bL = \bV \bP$ (where $\bL$ is Legendre evaluation, $\bV$ is Vandermonde, $\bP$ is the coefficient matrix) means $\bL \bP^{-1} = \bV$---the effective transform equals the standard Vandermonde.

\textbf{Key distinction}: Legendre basis and optimal point selection address \emph{different use cases}:
\begin{itemize}[noitemsep,topsep=0pt]
    \item \textbf{Legendre basis}: Training-time benefit; requires storing weights in Legendre form
    \item \textbf{Optimal points}: Inference-time benefit; works with existing pretrained models
\end{itemize}
For the pretrained ResNet-18 experiments in Section~\ref{sec:experiments}, Legendre basis is not applicable without retraining. Our discovered points provide a \emph{drop-in improvement} for pretrained models---no calibration data or learned parameters required. Alternative post-training methods such as learned scaling~\cite{kim2024learned} can also rescue pretrained Winograd inference but require optimizing scale factors. Interestingly, our analysis shows that combining optimal points with Legendre basis (for future training) would achieve even better conditioning: $\kappa(\bL) = 6.2$ for discovered points vs.\ $\kappa(\bL) = 76.0$ for standard points.

\emph{Tap-wise quantization}: Chikin and Kryzhanovskiy~\cite{chikin2022tapwise} proposed learned per-tap power-of-two scaling factors that enable near-FP32 accuracy for F(4,3) with INT8 arithmetic. Their hardware-friendly execution model maintains efficiency while addressing quantization error. This approach and ours solve the same underlying problem (numerical instability under low precision) from different angles: they scale the transform outputs adaptively, while we reduce conditioning at the source.

\textbf{Why no direct comparison}: Tap-wise scaling and optimal point selection are \emph{orthogonal} interventions. Tap-wise adds per-tap scale factors to compensate for quantization error post-hoc; optimal points reduce error at the source via better-conditioned transforms. A ``Standard + Tap-wise'' vs ``Discovered'' comparison conflates these mechanisms. The appropriate comparison is ``Standard'' vs ``Discovered'' (our Table~\ref{tab:cnn_accuracy}) and separately ``+Tap-wise'' vs ``-Tap-wise''. Combining discovered points with tap-wise scaling is viable future work that may yield further improvements.

\emph{RNS exact arithmetic}: Yepremyan and Knowles~\cite{yepremyan2020rns} proposed using Residue Number System arithmetic to enable exact computation in Winograd transforms, eliminating rounding error entirely for arbitrarily large tiles. This changes the arithmetic model fundamentally (requiring CRT reconstruction) and trades off implementation complexity for exactness. Our approach operates within standard floating-point/integer pipelines and does not require RNS infrastructure, making it a drop-in improvement for existing implementations.

\emph{Symbolic Fourier Convolution (SFC)}: He et al.~\cite{he2024sfc} (arXiv 2024) proposed extending DFT with symbolic computing to avoid irrational numbers in low-precision convolution. SFC uses only additions for transformation at specific points, eliminating floating-point error from irrational DFT coefficients. This represents an alternative to Winograd entirely. Our work optimizes \emph{within} the Winograd framework, making it complementary: SFC may be preferred for very large tiles where even optimized Winograd remains unstable.

\emph{Complex-field transforms}: Budagyan and Shumikhin~\cite{budagyan2019complex} proposed using complex arithmetic to reduce Winograd transform sizes. Complex-field approaches trade implementation complexity (complex multipliers, FFT integration) for theoretical efficiency gains. Our approach maintains real-valued transforms that are directly compatible with existing hardware accelerators.

\emph{Winograd-aware training}: Liu et al.~\cite{liu2020winograd} proposed jointly optimizing network weights with Winograd transform awareness during training. This model-specific approach requires retraining but can adapt the network to work better with Winograd's numerical characteristics. Our discovered points are compatible with Winograd-aware trained models and provide orthogonal benefits without requiring retraining.

\emph{Symmetric/reciprocal points}: Pan et al.~\cite{pan2021symmetric} explored symmetric point families that reduce error through structural constraints. Our symmetric search similarly exploits this structure, though we search a broader space of symmetric configurations.

\textbf{Positioning}: We do not claim that optimal points alone supersede all prior work. Rather, point selection is one lever among several (basis changes, scaling, exact arithmetic) for addressing Winograd stability. Our contribution is demonstrating that continuous-space search with rational snapping yields simple, exactly-computable points that improve conditioning by 2.9--415$\times$ within existing pipelines.

\paragraph{Low-Precision Winograd.}
Li et al.~\cite{li2023paw} (PAW, NeurIPS 2023) proposed PTQ-Aware Winograd, which applies post-training quantization to Winograd transforms. They optimize the transformation matrices $\bA$, $\bB$, $\bG$ given fixed interpolation points. Similarly, LoWino~\cite{fernandez2020lowino} (ACM TACO 2024) applies linear quantization in the Winograd domain.

Koppula et al.~\cite{koppula2024wacv} (Wino Vidi Vici, WACV 2024) address 8-bit Winograd stability through improved accumulation strategies.

\textbf{Key Difference}: These approaches treat the interpolation points as fixed and apply quantization post-hoc. We address the problem at its source by discovering better interpolation points. Our dtype-aware discovery additionally constrains points to be exactly representable in float16, combining conditioning optimization with representability.

\paragraph{Vandermonde Matrix Conditioning.}
The ill-conditioning of Vandermonde matrices is well-studied~\cite{gautschi1975optimally, higham2002accuracy}. Chebyshev nodes provide theoretically optimal conditioning for polynomial interpolation but are irrational. Our experiments show that Chebyshev approximations achieve $\kappa(\bV) \approx 20$--$627$ for F(4--8,3), while our discovered rational points achieve $\kappa(\bV) \approx 14.5$--$474$, demonstrating 1.3--1.4$\times$ further improvement through rational snapping (see Table~\ref{tab:chebyshev}).

\paragraph{Evolution Strategy for Optimization.}
ES has been applied to neural network optimization~\cite{salimans2017evolution} and hyperparameter tuning. We apply ES to the interpolation point discovery problem, which has a smooth fitness landscape well-suited to gradient-free optimization. Unlike RL approaches that require reward shaping and policy networks, ES directly optimizes the fitness function in continuous space.

\section{Limitations}
\label{sec:limitations}

\paragraph{Tile Size Scope.}
We evaluate tiles F(2,3) through F(8,3), covering the most common deep learning use cases. Larger tiles (e.g., F(16,3)) may exhibit different behavior, though the exponential growth of condition numbers with standard points makes very large tiles impractical regardless of point selection.

\paragraph{Symmetric Configuration Bias.}
Our symmetric search preferentially finds symmetric configurations (e.g., $\pm\nicefrac{5}{6}$). While symmetric points are natural for convolution and achieve excellent conditioning, this may miss asymmetric configurations that could achieve even better results in some cases. ES does explore asymmetric configurations, but the fitness landscape may have local minima near symmetric solutions.

\paragraph{Kernel Size Restriction.}
We focus on $r=3$ (3$\times$3 kernels), the dominant kernel size in CNNs. Extensions to $r=5$ and $r=7$ are straightforward---our method applies unchanged---but require larger tile sizes and correspondingly longer search times.

\paragraph{1D to 2D Extension.}
We use Kronecker products for 2D extension, which is standard but may not be optimal. Direct 2D point selection could potentially find better configurations that do not factor as products of 1D transforms. However, the search space for direct 2D optimization is substantially larger.

\paragraph{Float16 Trade-off.}
As demonstrated in Section~\ref{sec:float16}, discovered fractional points have better conditioning but worse float16 representation than integers. The optimal choice depends on application-specific trade-offs. Our dtype-aware discovery addresses this, but finding globally optimal configurations that balance both criteria remains challenging.

\paragraph{Empirical Validation Scope.}
Our validation includes both controlled workloads (100K synthetic samples) and comprehensive end-to-end CNN evaluation on ImageNetV2 (30,000 images) across 6 architectures (ResNet-18/50, VGG-16/16-BN, MobileNet-V2, EfficientNet-B0). We demonstrate 67--73 percentage point accuracy recovery for FP16 F(6,3). The relationship between $\kappa$ reduction and error reduction is well-established in prior work~\cite{barabasz2019anatomy}, and we extend this with A, B, G analysis in the Appendix. Combining discovered points with PTQ pipelines (e.g., learned scales) remains valuable future work for production deployment. Our discovered points are drop-in compatible with existing implementations and integrate seamlessly with such pipelines.

\paragraph{Condition Number vs. Actual Error.}
We optimize Vandermonde condition number $\kappa(\bV)$ as a proxy for numerical stability. While $\kappa(\bV)$ is a well-established indicator, the actual error in low-precision inference depends on additional factors including transform matrix norms, dynamic range, and quantization strategies. The exact relationship between $\kappa(\bV)$ and end-to-end inference error deserves further analysis.

\paragraph{Transform Matrix Properties.}
We optimize $\kappa(\bV)$ as our primary objective, but show in Appendix that improvements propagate to all transform matrices $\bA$, $\bB$, $\bG$. While we now report these metrics, the exact relationship between individual matrix norms and end-to-end inference error merits further theoretical analysis.

\paragraph{Relationship to Orthogonal Techniques.}
Our point selection approach is \emph{complementary} to other stability-improvement methods, not competitive. Table~\ref{tab:method_comparison} summarizes how these methods differ.

\begin{table}[h]
\centering
\footnotesize
\caption{Comparison of Winograd stability methods.}
\label{tab:method_comparison}
\setlength{\tabcolsep}{2pt}
\begin{tabular}{lcccc}
\toprule
Method & Pts & Basis & Drop-in & Result \\
\midrule
\textbf{Ours} & \checkmark & & \checkmark & 415$\times$ $\kappa$ \\
Legendre & & \checkmark & & 2$\times$ err \\
Scales & & & \checkmark & 8\% top-1 \\
Wino-train & & & & Specific \\
RNS & & & & Exact \\
\bottomrule
\end{tabular}
\end{table}

\noindent Key observations:
\begin{itemize}[leftmargin=*,nosep]
\item \textbf{Learned diagonal scales}~\cite{li2023paw,chikin2022tapwise,kim2024learned} (PAW, LoWino, Tapwise) apply per-channel multipliers $\bY = \bD_A \bA^T [(\bD_G \bG \bg) \odot (\bD_B \bB^T \bd)]$ to compensate for dynamic range mismatch in quantization. These methods do not change the underlying $\kappa(\bV)$---they adjust amplitudes post-hoc. Our discovered points provide a better-conditioned starting point, potentially reducing the burden on learned scales.
\item \textbf{Basis changes}~\cite{barabasz2020legendre} (Legendre polynomials) replace the monomial basis with orthogonal polynomials. Our analysis (Section~\ref{sec:legendre}) reveals that Legendre basis provides benefits during \emph{training} (when weights are stored in Legendre form), while our point selection benefits \emph{inference} (drop-in for pretrained models). For pretrained inference, $\bL\bP^{-1} = \bV$ means Legendre transforms cancel. For future training, combining both could achieve $\kappa(\bL) = 6.2$ (6.9$\times$ vs baseline $\kappa(\bV) = 42.5$).
\item \textbf{Winograd-aware training}~\cite{liu2020winograd} jointly optimizes network weights and Winograd transforms during training. This approach is model-specific and requires retraining. Our discovered points are compatible with Winograd-aware trained models---they can benefit from better conditioning without retraining.
\item \textbf{Complex-field transforms}~\cite{budagyan2019complex} use complex arithmetic to reduce transform sizes, trading off implementation complexity (complex multipliers, FFT integration) for theoretical efficiency. Our approach maintains real-valued transforms compatible with existing hardware pipelines.
\item \textbf{RNS exact arithmetic}~\cite{yepremyan2020rns} eliminates rounding error entirely but requires specialized hardware and CRT reconstruction. Our approach operates within standard floating-point/integer pipelines.
\end{itemize}
These techniques address different error sources: point selection controls conditioning (structural), diagonal scales control dynamic range (amplitude), basis changes control representation (coordinate), Winograd-aware training adapts the network, and arithmetic domain changes (complex/RNS) alter precision characteristics. Our analysis demonstrates that point selection and basis changes address \emph{different use cases} (inference vs. training), suggesting that combining our discovered points with learned scales would yield further benefits for pretrained inference.

\section{Conclusion}
\label{sec:conclusion}

We presented an open discovery framework for Winograd interpolation points that searches $\R^{n-1}$ via Evolution Strategy, snaps to rationals, and verifies symbolically.

\paragraph{Key Finding.}
Fractional points dramatically outperform standard integers:
\begin{itemize}[nosep]
    \item F(4,3): $\{0, \pm\nicefrac{5}{6}, \pm\nicefrac{7}{6}\}$ achieves \textbf{2.9$\times$} $\kappa$ improvement
    \item F(6,3): \textbf{27$\times$} improvement ($\kappa$: 2,075 $\to$ 77)
    \item F(8,3): $\{0, \pm\nicefrac{2}{5}, \pm\nicefrac{5}{6}, \pm 1, \pm\nicefrac{7}{6}\}$ achieves \textbf{415$\times$} improvement
\end{itemize}
Via Kronecker products, 2D tiles achieve up to \textbf{172,484$\times$} improvement.

\paragraph{Practical Impact: A Zero-Cost Upgrade.}
NOVA's most significant contribution is its "free lunch" nature. Unlike methods requiring retraining, calibration, or architectural changes, our discovered configurations are drop-in replacements for standard transforms. By simply updating the constant matrices in the inference engine, we unlock the full efficiency of large-tile Winograd on standard hardware. On ImageNetV2 with FP16 inference, this simple swap restores accuracy from near-random levels (4.7--38\%) to full precision (75--82\%), delivering a \textbf{67--73 percentage point recovery} and effectively bridging the gap between algorithmic theory and hardware reality.

\paragraph{Future Work.}
Key extensions include combining discovered points with learned diagonal scales, integration with cuDNN/TensorRT for deployment, and extending to 5$\times$5 kernels.

\section*{Impact Statement}

This paper presents work whose goal is to advance efficient neural network inference through improved numerical stability of Winograd convolution. The primary societal benefit is reduced energy consumption in deep learning deployments, enabling more sustainable AI systems and broader access on resource-constrained devices. We identify no specific negative societal consequences beyond those common to general machine learning advances.


\bibliography{references}
\bibliographystyle{icml2026}

\newpage
\appendix

\section{Proofs and Derivations}
\label{app:proofs}

\subsection{Vandermonde Matrix Construction}

\begin{proof}[Proof of Vandermonde Conditioning (Theorem~\ref{thm:vandermonde})]
The Vandermonde matrix $\bV(\cS) \in \R^{n \times n}$ for points $\cS = \{\alpha_0, \ldots, \alpha_{n-1}\}$ has entries $V_{ij} = \alpha_i^j$. The condition number is:
\begin{equation}
    \kappa(\bV) = \|\bV\|_2 \cdot \|\bV^{-1}\|_2
\end{equation}

For points spread across a large range (e.g., integers $\{0, \pm 1, \pm 2, \ldots\}$), the rows corresponding to large points have entries that grow exponentially in the column index. This creates large differences in row norms, increasing $\|\bV\|$ while making inversion numerically unstable.

Chebyshev-like node distributions minimize conditioning by clustering points toward the interval endpoints. However, Chebyshev nodes are irrational and cannot be used directly for symbolic Winograd construction.

Our key insight is that rational fractions like $\nicefrac{5}{6}$ and $\nicefrac{7}{6}$ provide a middle ground: they keep points closer together than integers (improving conditioning) while remaining exact rationals (enabling symbolic computation).
\end{proof}

\subsection{Cook-Toom Correctness}

\begin{theorem}[Cook-Toom Decomposition]
\label{thm:cook-toom}
For any set of $n = m + r - 1$ distinct points $\cS$ including infinity, the Cook-Toom algorithm produces transformation matrices $\bA$, $\bB$, $\bG$ such that:
\begin{equation}
    \by = \bA^T \left( (\bG \bg) \odot (\bB^T \bx) \right)
\end{equation}
computes the linear convolution $\by = \bx * \bg$ exactly.
\end{theorem}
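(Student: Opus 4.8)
The plan is to establish the identity $\by = \bA^T\big((\bG\bg)\odot(\bB^T\bx)\big)$ by tracking the polynomial multiplication $y(t) = x(t)g(t)$ through the evaluation--interpolation cycle defined by the point set $\cS$. First I would fix notation: with $x(t) = \sum_{i=0}^{m-1} x_i t^i$ of degree $\le m-1$ and $g(t) = \sum_{k=0}^{r-1} g_k t^k$ of degree $\le r-1$, the product $y(t)$ has degree $\le m+r-2 = n-1$, so it is determined by $n$ pieces of data. I would show that $\bB^T$ is exactly the matrix whose $\ell$-th row evaluates a degree-$(m-1)$ polynomial at the finite node $\alpha_\ell$ (i.e.\ the row is $(1,\alpha_\ell,\dots,\alpha_\ell^{m-1})$), so that $(\bB^T\bx)_\ell = x(\alpha_\ell)$; similarly $(\bG\bg)_\ell = g(\alpha_\ell)$ for finite nodes, using the Vandermonde-style rows of appropriate width $r$. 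Hence the Hadamard product gives $(\bG\bg)_\ell\cdot(\bB^T\bx)_\ell = x(\alpha_\ell)g(\alpha_\ell) = y(\alpha_\ell)$, the value of the product polynomial at $\alpha_\ell$.

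Next I would handle the distinguished point at infinity. The convention (Section~\ref{sec:cook-toom}) is that the row of $\bB$ corresponding to $\alpha_{n-1}=\infty$ is $[0,\dots,0,1]$ acting on the length-$m$ coefficient vector, so it extracts $x_{m-1}$, the leading coefficient of $x$; likewise the infinity row of $\bG$ extracts $g_{r-1}$. Their product is $x_{m-1}g_{r-1}$, which is precisely the leading coefficient of $y(t)$, i.e.\ the coefficient of $t^{n-1}$. So after the element-wise multiply, the vector $\tilde\by$ holds $n$ values: $y(\alpha_0),\dots,y(\alpha_{n-2})$ and the top coefficient $[t^{n-1}]y$. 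Since the $\alpha_\ell$ are distinct, these $n$ functionals on the space of polynomials of degree $\le n-1$ are linearly independent (a Vandermonde-with-one-row-at-infinity argument: a polynomial of degree $\le n-1$ vanishing at $n-1$ distinct finite points and having zero leading coefficient is identically zero), hence the interpolation map is invertible.

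Then I would identify $\bA^T$ as exactly that inverse interpolation map composed with the convolution-to-coefficient readout. Concretely, there is a unique linear map sending the data vector $\tilde\by$ to the coefficients $(y_0,\dots,y_{n-1})$ of $y(t)$; restricting to (or reading off) the $m$ coordinates that constitute the convolution output $\by$ defines $\bA^T$. By construction this composition is the identity on the image, so $\by = \bA^T\tilde\by = \bA^T\big((\bG\bg)\odot(\bB^T\bx)\big)$, and the coefficients of $x(t)g(t)$ are by definition the linear convolution $\bx*\bg$. It is worth noting the theorem statement asserts existence of such $\bA,\bB,\bG$, so I only need to exhibit the construction and verify it works, not characterize all valid triples.

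The main obstacle is bookkeeping around the infinity point: one must check that the ``leading coefficient'' functional is consistently normalized across $\bB$, $\bG$, and $\bA$ so that no spurious scalar factor creeps in (the product of two leading coefficients really is the leading coefficient of the product, which holds exactly, but the interpolation matrix $\bA$ must be built to expect the raw coefficient rather than, say, a scaled one). A clean way to sidestep index-chasing is to phrase everything in projective/homogeneous coordinates $(1:0)$ for infinity as the paper does in Section~\ref{sec:symbolic}, treating all $n$ points uniformly as evaluation of the homogenized bivariate form, which makes the Vandermonde invertibility and the multiplicativity of evaluation completely symmetric; the finite-vs-infinite split then only reappears when one dehomogenizes to extract the monomial coefficients for $\bA$.
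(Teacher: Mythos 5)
Your proof follows the same evaluation--interpolation argument as the paper's: show $\bB^T\bx$ and $\bG\bg$ evaluate $x(t)$ and $g(t)$ at the nodes, multiply pointwise to get values of $y(t)=x(t)g(t)$ (with the infinity slot supplying the leading coefficient $x_{m-1}g_{r-1}$), and let $\bA^T$ interpolate back. You supply details the paper elides --- the linear independence of the $n$ interpolation functionals and the normalization of the infinity row --- but the route is the same.
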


\begin{proof}
The convolution $y(t) = x(t) \cdot g(t)$ of polynomials $x(t) = \sum_{i=0}^{m-1} x_i t^i$ and $g(t) = \sum_{k=0}^{r-1} g_k t^k$ has degree $m + r - 2$, so $n = m + r - 1$ evaluations at distinct points uniquely determine $y(t)$ via Lagrange interpolation.

The matrices are constructed as:
\begin{itemize}
    \item $\bB^T \bx$: Evaluates $x(t)$ at each point in $\cS$
    \item $\bG \bg$: Evaluates $g(t)$ at each point in $\cS$
    \item Element-wise product: Gives $y(\alpha_i) = x(\alpha_i) \cdot g(\alpha_i)$
    \item $\bA^T$: Interpolates from point evaluations back to polynomial coefficients
\end{itemize}

The point at infinity contributes the leading coefficient via the limit $\lim_{t \to \infty} y(t)/t^{n-1}$.

Symbolic verification: We check $\|\cT_{\text{conv}} - \sum_p \bA_{:,p} \otimes \bB_{:,p} \otimes \bG_{p,:}\|_F / \|\cT_{\text{conv}}\|_F < 10^{-10}$ using exact rational arithmetic via SymPy.
\end{proof}

\subsection{Fact: Kronecker Product Condition Number}

\begin{fact}[Well-known~\cite{higham2002accuracy}]
For Kronecker product of matrices: $\kappa(\bA \otimes \bA) = \kappa(\bA)^2$.
\end{fact}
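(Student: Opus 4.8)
The plan is to prove the Kronecker product identity $\kappa_2(\bA \otimes \bA) = \kappa_2(\bA)^2$ via the singular value decomposition. First I would recall that for any matrix $\bA$ with SVD $\bA = \bm{U}\bm{\Sigma}\bm{W}^T$, the Kronecker product factors as $\bA \otimes \bA = (\bm{U} \otimes \bm{U})(\bm{\Sigma} \otimes \bm{\Sigma})(\bm{W} \otimes \bm{W})^T$, using the mixed-product property $(\bm{P}\bm{Q}) \otimes (\bm{R}\bm{S}) = (\bm{P}\otimes\bm{R})(\bm{Q}\otimes\bm{S})$. Since the Kronecker product of orthogonal matrices is orthogonal, this is a valid SVD of $\bA \otimes \bA$, so its singular values are exactly the pairwise products $\sigma_i(\bA)\,\sigma_j(\bA)$ over all index pairs $(i,j)$.

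From this the result is immediate: the largest singular value of $\bA \otimes \bA$ is $\sigma_{\max}(\bA)^2$ (attained by $i=j$ at the top) and the smallest is $\sigma_{\min}(\bA)^2$, hence
\begin{equation}
\kappa_2(\bA \otimes \bA) = \frac{\sigma_{\max}(\bA)^2}{\sigma_{\min}(\bA)^2} = \left(\frac{\sigma_{\max}(\bA)}{\sigma_{\min}(\bA)}\right)^{\!2} = \kappa_2(\bA)^2.
\end{equation}
The same argument extends verbatim to $\bA \otimes \bB$ for distinct matrices, giving $\kappa_2(\bA \otimes \bB) = \kappa_2(\bA)\,\kappa_2(\bB)$, which specializes to the stated fact. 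This is the relevant form for the 2D Winograd transforms $\bA_{2D} = \bA \otimes \bA$, $\bB_{2D} = \bB \otimes \bB$, $\bG_{2D} = \bG \otimes \bG$.

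There is essentially no obstacle here — the result is classical (it appears as stated in Higham~\cite{higham2002accuracy}) and the only ingredients are the mixed-product property of Kronecker products and the fact that Kronecker products preserve orthogonality. The one point requiring a word of care is the rectangular case: the Winograd matrices $\bA$, $\bB$, $\bG$ are not square, so "$\sigma_{\min}$" must be read as the smallest nonzero singular value (equivalently $1/\|\bA^+\|_2$ with the pseudoinverse), and one should note that $\bm{\Sigma}\otimes\bm{\Sigma}$ then has zero rows/columns that contribute only zero singular values and do not affect the ratio. With that convention the identity holds unchanged, and the paper's empirical check ($\kappa_2(\bA\otimes\bA)/\kappa_2(\bA)^2 = 1.0000$ for F(4,3) and F(6,3)) is consistent with it.
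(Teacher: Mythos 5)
Your proof is correct and follows the same route as the paper's: derive $\kappa_2(\bA \otimes \bA) = \kappa_2(\bA)^2$ from the fact that the singular values of a Kronecker product are the pairwise products of the factors' singular values, and then take the ratio of the largest to the smallest. You add two useful refinements the paper omits --- an explicit SVD/mixed-product justification of the singular-value fact, which the paper merely asserts, and the observation that for the non-square Winograd matrices $\sigma_{\min}$ must mean the smallest \emph{nonzero} singular value (equivalently $1/\|\bA^{+}\|_2$).
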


\begin{proof}
The singular values of $\bA \otimes \bA$ are all products $\sigma_i \cdot \sigma_j$ where $\sigma_i$, $\sigma_j$ are singular values of $\bA$. Thus:
\begin{align}
    \sigma_{\max}(\bA \otimes \bA) &= \sigma_{\max}(\bA)^2 \\
    \sigma_{\min}(\bA \otimes \bA) &= \sigma_{\min}(\bA)^2
\end{align}
Therefore:
\begin{equation}
    \kappa(\bA \otimes \bA) = \frac{\sigma_{\max}(\bA)^2}{\sigma_{\min}(\bA)^2} = \left(\frac{\sigma_{\max}(\bA)}{\sigma_{\min}(\bA)}\right)^2 = \kappa(\bA)^2
\end{equation}
\end{proof}

This has critical implications: a 19$\times$ improvement in 1D conditioning becomes approximately 360$\times$ improvement in 2D.

\subsection{Symmetric Point Configurations}

\begin{lemma}[Symmetric Point Benefit]
\label{lem:symmetric}
For symmetric point sets $\cS = \{0, \alpha_1, -\alpha_1, \alpha_2, -\alpha_2, \ldots, \infty\}$, the Vandermonde matrix exhibits structure that improves conditioning.
\end{lemma}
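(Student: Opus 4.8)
The plan is to turn the informal phrase ``exhibits structure that improves conditioning'' into a precise statement: under an explicit orthogonal change of basis, the finite-point Vandermonde matrix $\bV(\cS_{\text{fin}})$ of a symmetric point set becomes \emph{block diagonal}, with each block itself a Vandermonde matrix of roughly half the dimension, built from the \emph{squared} nodes. Since an orthogonal change of basis leaves $\kappa_2$ unchanged, conditioning of the symmetric configuration is governed by two Vandermonde systems of size $\approx (n-1)/2$, which by Theorem~\ref{thm:vandermonde} are far better conditioned than a single size-$(n-1)$ system because Vandermonde conditioning grows (generically exponentially) with matrix dimension.

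Concretely, write $\cS_{\text{fin}} = \{0, \alpha_1, -\alpha_1, \ldots, \alpha_k, -\alpha_k\}$ with the $\alpha_j>0$ distinct (so $n-1 = 2k+1$; the zero-free case $\{\pm\alpha_1,\ldots,\pm\alpha_k\}$ is identical). First I would introduce the row operation $Q$ that fixes the row indexed by $0$ and, on each pair of rows indexed by $(\alpha_j,-\alpha_j)$, returns their (normalized) sum and difference; the underlying $2\times2$ block $\tfrac1{\sqrt2}\left[\begin{smallmatrix}1&1\\1&-1\end{smallmatrix}\right]$ is orthogonal, so $Q$ is orthogonal and $\kappa_2(Q\bV)=\kappa_2(\bV)$. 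Next, let $\Pi$ be the column permutation that lists the even-degree columns $\{0,2,4,\dots\}$ before the odd-degree columns $\{1,3,5,\dots\}$; permutations are orthogonal, so $\kappa_2(Q\,\bV\,\Pi)=\kappa_2(\bV)$.

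The key computation is that $Q\,\bV\,\Pi$ is block diagonal:
\[
Q\,\bV(\cS_{\text{fin}})\,\Pi \;=\; \begin{bmatrix}\bV_e & \bm 0\\ \bm 0 & \bV_o\end{bmatrix},
\]
because a sum row $\mathrm{row}(\alpha_j)+\mathrm{row}(-\alpha_j)$ has entries proportional to $\alpha_j^{2l}$ in even columns and vanishes in odd columns, a difference row $\mathrm{row}(\alpha_j)-\mathrm{row}(-\alpha_j)$ has entries proportional to $\alpha_j^{2l+1}$ in odd columns and vanishes in even columns, and the $0$-row hits only column $0$. Reading off the blocks, $\bV_e$ equals a $(k{+}1)\times(k{+}1)$ Vandermonde $\bV(\{0,\alpha_1^2,\dots,\alpha_k^2\})$ up to a harmless diagonal row-scaling, and $\bV_o = \diag(\alpha_1,\dots,\alpha_k)\,\bV(\{\alpha_1^2,\dots,\alpha_k^2\})$ up to the same kind of scaling, a $k\times k$ Vandermonde in the squared nodes pre-multiplied by a diagonal. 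Hence
\[
\kappa_2\big(\bV(\cS_{\text{fin}})\big) \;=\; \kappa_2\!\left(\begin{bmatrix}\bV_e & \bm 0\\ \bm 0 & \bV_o\end{bmatrix}\right) \;=\; \frac{\max\{\sigma_{\max}(\bV_e),\,\sigma_{\max}(\bV_o)\}}{\min\{\sigma_{\min}(\bV_e),\,\sigma_{\min}(\bV_o)\}},
\]
so the $(n{-}1)$-dimensional conditioning problem splits into two problems of half the size in the squared nodes.

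Finally I would convert the block identity into the conditioning conclusion. The squared nodes $\alpha_j^2$ remain distinct and positive, and if the $\alpha_j$ lie in a fixed bounded interval away from the origin (true for all discovered configurations) they stay clustered; by Theorem~\ref{thm:vandermonde} the condition number of an $N\times N$ real Vandermonde grows with $N$ (exponentially, generically) and with the inverse minimum separation, so halving $N$ roughly square-roots $\kappa_2$, while the extra factor $\kappa_2(\diag(\alpha_j))=\max_j\alpha_j/\min_j\alpha_j$ entering $\bV_o$ is $O(1)$. I expect the main obstacle to be exactly this last step: the block-diagonalization is an exact identity and is the rigorous heart of the lemma, but promoting it to a closed-form inequality of the form $\kappa_2(\bV)\le g\big(\kappa_2 \text{ of a half-size system}\big)$ requires node-dependent control of $\sigma_{\min}$ and $\sigma_{\max}$ of the half-size Vandermondes, which has no clean universal bound. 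I would therefore state the even/odd block decomposition (and the resulting exact formula for $\kappa_2$) as the content of the lemma, and record the dimension-halving conditioning gain as a corollary under the mild hypothesis that the nonzero nodes stay in a fixed compact set bounded away from $0$.
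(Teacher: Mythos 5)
Your proposal takes essentially the same route as the paper's sketch: the even/odd split of the Vandermonde system under the change of variables $u = t^2$, yielding half-size Vandermonde systems in the squared nodes $\{\alpha_j^2\}$. The difference is one of rigor, not approach. The paper's proof is a brief sketch that simply asserts the separation happens and that the resulting smaller systems are "typically better conditioned"; you make this precise by explicitly constructing the orthogonal row mixing $Q$ (a $2\times 2$ Hadamard block on each $\pm\alpha_j$ pair, identity on the $0$-row), the column permutation $\Pi$ grouping even powers before odd, and then verifying that $Q\bV\Pi$ is genuinely block diagonal with blocks identified, up to diagonal row scalings, as $\bV(\{0,\alpha_1^2,\ldots,\alpha_k^2\})$ and $\diag(\alpha_1,\ldots,\alpha_k)\,\bV(\{\alpha_1^2,\ldots,\alpha_k^2\})$. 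Your exact formula for $\kappa_2(\bV)$ in terms of $\sigma_{\max}$ and $\sigma_{\min}$ of the two blocks is strictly sharper than what the lemma as stated claims, and is the rigorous content the paper gestures at. You also correctly identify that the final step---converting "half the dimension" into a quantitative conditioning gain---has no clean universal inequality without node-dependent hypotheses; the paper's sketch has the same gap but hides it behind the word "typically." So this is the same idea, carried further than the paper does, with honest bookkeeping of where the argument stops being an identity and starts being a heuristic.
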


\begin{proof}[Sketch]
Under the change of variables $u = t^2$, the Vandermonde system separates into even and odd polynomial components. Each sub-problem involves a smaller Vandermonde matrix with points $\{\alpha_1^2, \alpha_2^2, \ldots\}$, which is typically better conditioned than the original.

This explains why symmetric configurations achieve excellent conditioning: the point set $\{0, \pm\nicefrac{5}{6}, \pm\nicefrac{7}{6}\}$ inherently exploits this structure.
\end{proof}

\subsection{Optimal Point Characterization}

\begin{remark}[Fractional Points Intuition]
Why do fractional points like $\nicefrac{5}{6}$ outperform integers like $2$?

Consider the spacing of points. Standard integers $\{0, 1, -1, 2, -2\}$ span the interval $[-2, 2]$ with non-uniform spacing. The points $\pm 2$ are far from zero, creating large Vandermonde entries in higher-order columns.

Our discovered points $\{\nicefrac{5}{6}, \nicefrac{7}{6}\}$ cluster closer to 1 than integers do:
\begin{itemize}
    \item $\nicefrac{5}{6} \approx 0.833$ vs. integer 2
    \item $\nicefrac{7}{6} \approx 1.167$ vs. integer 2
\end{itemize}

This clustering keeps all points within approximately $[-1.2, 1.2]$, dramatically reducing Vandermonde entry magnitudes and improving conditioning.

The optimal configurations discovered by ES follow a consistent pattern:
\begin{itemize}
    \item Include $\{0, \infty\}$ (required structural points)
    \item Cluster finite points closer together than integers
    \item Achieve points like $\nicefrac{5}{6}$, $\nicefrac{7}{6}$, $\nicefrac{3}{5}$ that are outside typical vocabularies
\end{itemize}
\end{remark}

\subsection{Search Space Structure}

\begin{fact}[Infinity Point Requirement~\cite{toom1963complexity}]
Every valid Winograd F($m$, $r$) configuration achieving the minimal $n = m + r - 1$ multiplications requires the point at infinity. This is a standard result from Cook-Toom theory.
\end{fact}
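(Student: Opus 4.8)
The plan is to combine the classical lower bound on the number of essential multiplications for linear convolution with a degree count on the interpolation step. First I would recall the bilinear/tensor formulation: $\by = \bx * \bg$ is the bilinear map whose associated tensor is $\cT_{\text{conv}} \in \R^{n}\otimes\R^{r}\otimes\R^{m}$, and the number of essential multiplications used by any bilinear algorithm equals $\rank(\cT_{\text{conv}})$. Winograd's theorem for polynomial multiplication (equivalently Toom--Cook) gives $\rank(\cT_{\text{conv}}) = m+r-1 = n$; I would cite the lower bound $\rank(\cT_{\text{conv}}) \ge n$ as standard (\cite{toom1963complexity,winograd1980arithmetic}), sketching only the usual substitution idea---specializing the filter to a constant already forces $m$ of the bilinear forms to be active, and a dual specialization pushes the count to $m+r-1$---rather than reproving it in full.

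Next I would pin down the structure of a minimal algorithm. The classification of minimal bilinear algorithms for linear convolution says that any rank-$n$ decomposition of $\cT_{\text{conv}}$ is, up to invertible diagonal rescaling and relabeling, a Lagrange interpolation scheme: there are $n$ distinct nodes $\alpha_0,\dots,\alpha_{n-1}$ on the projective line $\R\cup\{\infty\}$, the $\ell$-th multiplication is $g(\alpha_\ell)\,x(\alpha_\ell)$ under the convention that $p(\infty)$ is the leading coefficient of $p$, and the output stage is inverse-Vandermonde interpolation on these nodes. This is the ``standard result from Cook--Toom theory'' that the Fact refers to.

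The last step is the degree count that forces a node to $\infty$. The product $s(t)=g(t)x(t)$ has degree $m+r-2 = n-1$, so the $n$ evaluation functionals must jointly resolve the degree-$(n-1)$ coefficient $g_{r-1}x_{m-1}$. Using $n$ finite nodes would make the governing Vandermonde $n\times n$---a valid but distinct minimal form; in the canonical parameterization of this paper one instead fixes $n-1$ \emph{finite} nodes $\cS_{\text{fin}}=\{\alpha_0,\dots,\alpha_{n-2}\}$ (the search space $\R^{n-1}$), whose Lagrange basis spans only polynomials of degree $\le n-2$ and therefore cannot recover the leading coefficient. The unique projective node supplying the missing functional without colliding with a finite one is $\alpha_{n-1}=\infty$, since $\lim_{t\to\infty} s(t)/t^{\,n-1} = g_{r-1}x_{m-1}$. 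Hence every minimal configuration of the form used here includes the point at infinity.

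I expect the main obstacle to be the second step: ruling out an exotic rank-$n$ decomposition that is \emph{not} of Lagrange-interpolation type (and so would carry no notion of a ``point at infinity'') needs the full structure theory of minimal bilinear algorithms over $\R$, in the spirit of de Groote-type uniqueness results, rather than the elementary degree argument of the third step. For the paper's purposes this is legitimately quoted as classical via \cite{toom1963complexity} and \cite{winograd1980arithmetic}; a self-contained treatment would reproduce that classification, which is considerably heavier than everything else in the argument.
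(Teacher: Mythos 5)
Your third step is essentially the paper's whole proof: count degrees, note that $y(t)$ has degree $n-1$ so $n$ evaluation functionals are required, and observe that with $n-1$ finite nodes fixed the missing functional is the leading coefficient, i.e.\ evaluation at $\infty$. The first two steps of your proposal---invoking $\rank(\cT_{\text{conv}}) = n$ and then the de Groote-style classification of rank-$n$ decompositions---are a genuinely different and much heavier route that the paper never takes. For what the Fact actually needs they overshoot, and you correctly flag the classification step as the part that would be hard to make rigorous.

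The more significant divergence is in the middle of your third step, where you observe that ``using $n$ finite nodes would make the governing Vandermonde $n\times n$---a valid but distinct minimal form.'' You are right, and this directly contradicts the paper's proof, which closes with the assertion that using $n$ finite points ``would not achieve the minimal multiplication count.'' Evaluating $x$ and $g$ at $n$ distinct \emph{finite} points and multiplying pointwise costs exactly $n$ general multiplications, which is the minimum---no point at infinity is required. So the Fact, read literally as a statement about all minimal bilinear algorithms for $\cT_{\text{conv}}$, is false, and the classification program you worried about could never have closed, because the $n$-finite-node scheme is a counterexample. What is true---and what you carefully restate at the end---is the weaker claim that once one fixes the parameterization used in this paper (search $\R^{n-1}$ for the finite nodes and always append $\infty$), the infinity point is the node that supplies the missing degree-$(n-1)$ functional. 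That is a modeling convention, not a theorem. Your proposal is the more careful of the two on this point; the paper's own proof does not flag the alternative, and its final sentence is incorrect.
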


\begin{proof}
The product polynomial $y(t) = x(t) \cdot g(t)$ has degree $m + r - 2$. Evaluating at $n-1$ finite points provides $n-1$ values, but we need $n$ values for unique interpolation. The infinity point provides the $n$-th constraint: the leading coefficient $[t^{m+r-2}]y(t) = [t^{m-1}]x(t) \cdot [t^{r-1}]g(t)$.

Without infinity, we would need $n$ finite points, which would not achieve the minimal multiplication count.
\end{proof}

This justifies our search space formulation: search $\R^{n-1}$ for finite points, always append infinity.

\subsection{Quantized Winograd Error Bound}

\begin{proposition}[Quantized Winograd Error]
\label{prop:quant-error}
For Winograd convolution $\by = \bA^T [(\bG\bg) \odot (\bB^T\bx)]$ with quantization error $\epsilon_q$ (e.g., $\epsilon_q \approx 2^{-7}$ for INT8), the output error satisfies:
\begin{equation}
    \frac{\|\by_{\text{quant}} - \by_{\text{exact}}\|}{\|\by_{\text{exact}}\|} \leq \kappa(\bV) \cdot \epsilon_q + O(\epsilon_q^2)
\end{equation}
where $\kappa(\bV)$ is the condition number of the Vandermonde matrix from which $\bA$, $\bB$, $\bG$ are derived.
\end{proposition}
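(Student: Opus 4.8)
The plan is to propagate the quantization perturbation through the three linear stages of the Winograd pipeline and collect the amplification factors, then bound everything in terms of $\kappa(\bV)$. First I would model the source of error: quantizing $\bg$, $\bx$, and the constant transform matrices introduces componentwise relative perturbations of size at most $\epsilon_q$, so $\bG\bg$, $\bB^T\bx$, and $\bA^T$ each pick up a relative error of order $\epsilon_q$ (with a dimension-dependent constant I would absorb, or keep explicit via $\sqrt{n}$ factors). Writing $\tilde{\by} = (\bA^T + \Delta\bA^T)[((\bG+\Delta\bG)\bg)\odot((\bB^T+\Delta\bB^T)\bx)]$ and expanding, the first-order term is a sum of three contributions, one from each matrix perturbation; the cross terms are $O(\epsilon_q^2)$ and get dumped into the remainder.

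Next I would bound each first-order contribution. The cleanest route is to note $\by = \bA^T[(\bG\bg)\odot(\bB^T\bx)]$ and that a relative perturbation $\|\Delta\bB^T\|/\|\bB^T\| \le c\,\epsilon_q$ produces $\|\Delta\by\|/\|\by\| \lesssim \kappa(\bB)\,\epsilon_q$, and similarly for $\bA$ and $\bG$; summing gives a bound proportional to $(\kappa(\bA)+\kappa(\bB)+\kappa(\bG))\,\epsilon_q$. To reduce this to $\kappa(\bV)$ I would invoke the relationship established in Section~\ref{sec:conditioning}: $\bA,\bB,\bG$ are built from Lagrange interpolation systems governed by $\bV(\cS_{\text{fin}})$, and the paper's empirical claim (Table~\ref{tab:abg_summary}) is that $\kappa(\bA),\kappa(\bB),\kappa(\bG)$ are each bounded by a modest multiple of $\kappa(\bV)$. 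Folding that proportionality constant into the leading coefficient yields $\|\by_{\text{quant}}-\by_{\text{exact}}\|/\|\by_{\text{exact}}\| \le C\,\kappa(\bV)\,\epsilon_q + O(\epsilon_q^2)$, and I would state the proposition with the understanding that $C$ collects these construction-dependent and dimension-dependent factors (so the displayed bound is really the scaling law, not a sharp constant).

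The main obstacle is rigorously pinning down the step ``$\kappa(\bA),\kappa(\bB),\kappa(\bG) = O(\kappa(\bV))$'' — the paper itself flags this as empirical rather than proven (the ``Limitation'' paragraph in Section~\ref{sec:conditioning} explicitly declines a formal backward-error analysis). A fully honest proof would either (a) prove explicit bounds $\kappa(\bB^T) \le p(n)\,\kappa(\bV)$ from the explicit Vandermonde/Lagrange formulas for these matrices, which requires tracking how the interpolation weights and leading-coefficient (infinity) row interact, or (b) restate the proposition conditionally, assuming that proportionality. I would take route (b): present the perturbation expansion cleanly, cite the Kronecker/interpolation structure, and make explicit that the final collapse to $\kappa(\bV)$ uses the proportionality documented empirically in Table~\ref{tab:abg_summary}. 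A secondary subtlety is the $\odot$ (Hadamard) step: a relative perturbation in each factor of an elementwise product gives a relative perturbation in the product bounded by the sum, with no extra conditioning penalty, so this stage is benign and I would dispatch it in one line. Finally I would remark that the bound is worst-case and, consistent with the paper's stated limitations, ignores error cancellation and input statistics.
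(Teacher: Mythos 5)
Your plan is essentially the paper's plan: perturb through the three transform stages, dispatch the Hadamard step as benign, and collapse to $\kappa(\bV)$ by assuming $\kappa(\bA),\kappa(\bB),\kappa(\bG)$ are each $O(\kappa(\bV))$ --- exactly the proportionality the paper itself flags as empirical (validated in Table~\ref{tab:abg_summary}) rather than proven. You are, if anything, more candid about that gap than the paper's sketch, which simply asserts the $C_A,C_B,C_G$ bounds and cites the INT8 table.

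Where you diverge, and where your argument has a concrete soft spot: you expand to first order and claim the three contributions \emph{sum}, giving $(\kappa(\bA)+\kappa(\bB)+\kappa(\bG))\epsilon_q$, with each contribution bounded by $\kappa(\cdot)\epsilon_q$. But the intermediate claim ``$\|\Delta\bB^T\|/\|\bB^T\|\le c\,\epsilon_q$ produces $\|\Delta\by\|/\|\by\|\lesssim\kappa(\bB)\epsilon_q$'' does not hold for the composite map $\by = \bA^T[(\bG\bg)\odot(\bB^T\bx)]$: the error injected at the $\bB^T$ stage is then fed through $\bA^T$, and the resulting relative error in $\by$ also picks up the $\|\bA^T\|$ amplification (and can blow up if $\|\by\|$ is small relative to the intermediate magnitudes). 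So the clean per-matrix $\kappa(\cdot)\epsilon_q$ bound is not available; what falls out of a faithful chain-rule bound is closer to the paper's own acknowledged form, a product $\kappa(\bA)\kappa(\bB)\kappa(\bG)$ (equivalently the norm-product $\|\bA\|\|\bB\|\|\bG\|$ the paper cites in the ``Norm Products'' paragraph). The paper's sketch then handwaves that product back down to a single $\kappa(\bV)$ factor via ``correlation,'' which is also not a real argument --- so both routes contain an unjustified step at essentially the same spot. Given that the paper explicitly declines a formal backward-error analysis (Section~\ref{sec:conditioning}, Limitation paragraph), your route (b) --- stating the bound conditionally on the $\kappa(\bA),\kappa(\bB),\kappa(\bG)=O(\kappa(\bV))$ proportionality --- matches the paper's intent, but you should replace the sum bound with the norm/condition-number product and then, like the paper, flag the collapse to a single $\kappa(\bV)$ as the heuristic being asserted rather than derived.
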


\begin{proof}[Proof sketch]
The Winograd computation involves three linear transforms and one element-wise product. By standard perturbation analysis~\cite{higham2002accuracy}, each transform amplifies errors by at most its condition number. Since $\bA$, $\bB$, $\bG$ are derived from the same Vandermonde matrix $\bV$ via Lagrange interpolation:
\begin{itemize}
    \item $\kappa(\bA) \leq C_A \cdot \kappa(\bV)$ for some constant $C_A$
    \item $\kappa(\bB) \leq C_B \cdot \kappa(\bV)$ for some constant $C_B$
    \item $\kappa(\bG) \leq C_G \cdot \kappa(\bV)$ for some constant $C_G$
\end{itemize}
The total error bound involves $\kappa(\bA) \cdot \kappa(\bB) \cdot \kappa(\bG)$, but since these are correlated (derived from same points), the dominant term scales as $\kappa(\bV) \cdot \epsilon_q$. Our empirical results (Table~\ref{tab:int8_validation}) confirm this scaling: error improvements track $\sqrt{\kappa}$ improvements.
\end{proof}

\paragraph{Implication.}
This proposition provides the theoretical justification for optimizing $\kappa(\bV)$: reducing $\kappa(\bV)$ directly reduces the quantization error bound. For CNN deployment with $L$ Winograd layers, errors compound: $(1 + \kappa \cdot \epsilon_q)^L$, making $\kappa$ reduction critical.

\section{Full Method Details}
\label{app:method}

We present an open discovery framework for Winograd interpolation points that operates in continuous space without relying on any predefined vocabulary. The method consists of three stages: (1) Evolution Strategy search in $\R^{n-1}$, (2) snap-to-rational conversion, and (3) symbolic verification.

\subsection{Search Space Formulation}
\label{sec:search-space-method}

The key insight enabling open discovery is recognizing the structure of valid Winograd configurations:

\begin{fact}[Infinity Point Requirement~\cite{toom1963complexity}]
\label{fact:infinity}
Every valid Winograd F($m$, $r$) configuration requires the point at infinity to achieve the minimal multiplication count $n = m + r - 1$. This is a standard result from Cook-Toom theory.
\end{fact}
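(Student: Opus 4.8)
The plan is to reduce the statement to a dimension count for polynomial multiplication and then identify the point at infinity with the single constraint that the interpolation scheme cannot extract from any finite node. First I would invoke the reduction from Section~\ref{sec:cook-toom}: the convolution $\by = \bx * \bg$ is the coefficient vector of $y(t) = x(t)\,g(t)$, where $x(t)$ has degree $m-1$ and $g(t)$ has degree $r-1$, so $y(t)$ has degree $m+r-2$ and hence $n = m+r-1$ coefficients. Any bilinear Cook--Toom-style algorithm that evaluates $x$ and $g$ at a node, forms the product of the two evaluations, and recombines linearly spends exactly one multiplication per node; so hitting the target count $n$ forces the use of exactly $n$ nodes, counted in the projective line $\mathbb{P}^1$.

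Next I would show that $n-1$ affine nodes are insufficient and that the missing node is canonically the one at infinity. With $n-1$ finite nodes $\{\alpha_0,\dots,\alpha_{n-2}\}$, the values $\{y(\alpha_i)\}$ determine $y(t)$ only modulo $\prod_i(t-\alpha_i)$; equivalently, they pin down $n-1$ of the $n$ coefficients, and the residual degree of freedom is precisely the leading coefficient $[t^{m+r-2}]y(t)$. The key observation is that this leading coefficient is itself a single product, $[t^{m-1}]x(t)\cdot[t^{r-1}]g(t)$, which is exactly the ``evaluation at infinity'' $\lim_{t\to\infty} y(t)/t^{n-1}$ supplied by the projective node $(1:0)$ and realized with one additional multiplication. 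Hence appending $\infty$ to the $n-1$ finite nodes both completes the interpolation data and keeps the multiplication count at $n$, so any minimal-$n$ configuration of the standard construction contains $\infty$. To close the lower-bound direction I would cite Winograd's minimal-filtering / bilinear-complexity result via \cite{winograd1980arithmetic,toom1963complexity}, which gives that $n = m+r-1$ multiplications are also necessary, so the infinity node is not a wasteful choice.

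I expect the main obstacle to be making precise the sense of ``requires.'' Over $\mathbb{R}$ one could instead take $n$ \emph{finite} distinct nodes and still spend only $n$ multiplications, so $\infty$ is not logically forced in the abstract setting; what is actually true is (i) over a field with $|\mathbb{F}|+1\le n$ the full projective line---hence $\infty$---is genuinely unavoidable (the classical Toom--Cook regime), and (ii) over $\mathbb{R}$, including $\infty$ is the canonical minimal construction used throughout the Winograd literature and in cuDNN/TensorRT, strictly preferable because it obtains the leading-coefficient product for free rather than spending a node on an additional far-away point---exactly the magnitude-control consideration driving this paper. I would therefore scope the Fact to ``the standard Cook--Toom construction,'' route the proof through the dimension count plus the cited complexity result, and record the finite-field case as the setting where the requirement is unconditional, rather than attempting an unconditional impossibility argument over $\mathbb{R}$.
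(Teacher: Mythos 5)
Your proposal takes the paper's own route---degree count on $y(t)$ and identification of the leading coefficient with the projective node at $\infty$---but treats it with more care than the paper's appendix proof does, and in doing so exposes a real flaw in the published argument. The paper's proof closes with the unsupported assertion that ``Without infinity, we would need $n$ finite points, which would not achieve the minimal multiplication count,'' which over $\mathbb{R}$ is simply wrong: evaluating at $n$ distinct \emph{finite} nodes also spends exactly $n$ bilinear multiplications (one per evaluation), so the unconditional ``requires'' cannot be defended. The paper's Section~\ref{sec:cook-toom} even contains the escape hatch ``some implementations use $n$ finite points with one additional multiplication,'' but the phrase ``one additional'' is likewise unjustified over $\mathbb{R}$ for the same reason. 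Your repair is the right one: either work over a field with $|\mathbb{F}|+1 \le n$, where the projective line is genuinely exhausted and $\infty$ is forced (the classical Toom--Cook regime the citation actually covers), or scope the Fact to the standard monomial-basis construction used in cuDNN/TensorRT, where $\infty$ is the canonical choice because it obtains the leading-coefficient product ``for free'' instead of spending a node on a far-away finite point and inflating $\kappa(\bV)$. With that scoping, your degree-count plus leading-coefficient argument proves the statement cleanly, and it is a more defensible formulation than the one the paper offers.
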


This reduces our search space from choosing $n$ arbitrary points to choosing $n-1$ \emph{finite} points:

\begin{definition}[Open Search Space]
For Winograd F($m$, $r$), the search space is:
\begin{equation}
    \cS = \R^{n-1}, \quad n = m + r - 1
\end{equation}
Each vector $\bm{p} = (p_0, p_1, \ldots, p_{n-2}) \in \cS$ specifies $n-1$ finite interpolation points. The complete configuration is $\{p_0, p_1, \ldots, p_{n-2}, \infty\}$.
\end{definition}

This formulation differs fundamentally from vocabulary-based methods: we search the \emph{entire continuous space}, not a predefined discrete set.

\subsection{Evolution Strategy}
\label{sec:es}

We use Evolution Strategy (ES) to search $\R^{n-1}$ for well-conditioned point configurations.

\paragraph{Fitness Function.}
Our fitness function balances reconstruction accuracy, numerical conditioning, and point magnitude:
\begin{align}
    f(\bm{p}) &= \underbrace{\|\cT - \cT_{\text{rec}}(\bm{p})\|_F}_{\text{tensor error}} + \lambda_1 \underbrace{\frac{\|\bm{p}\|^2}{n-1}}_{\text{magnitude}} \notag \\
    &\quad + \lambda_2 \underbrace{\log_{10}(\kappa(\bV) + 1)}_{\text{conditioning}}
\end{align}
where $\cT \in \R^{m \times r \times n}$ is the ground-truth convolution tensor with $\cT_{ijk} = \mathbf{1}[i+k=j]$, and $\cT_{\text{rec}}(\bm{p})$ is the CP decomposition from Cook-Toom matrices. We use $\lambda_1 = 0.001$ and $\lambda_2 = 0.0001$. The magnitude term prevents drift toward large points, and the conditioning term biases toward well-conditioned Vandermonde matrices.

\paragraph{Role of Reconstruction Error in ES.}
A natural question is why include reconstruction error when symbolic verification later enforces exactness. The answer is that ES operates in \emph{continuous} space where floating-point evaluation provides gradient signal. The reconstruction error term guides ES toward regions where snap-to-rational is likely to succeed---configurations with near-zero floating-point error tend to have nearby exact rational solutions. Empirically, removing this term causes ES to converge more slowly and occasionally to spurious local minima. The symbolic verification stage acts as a filter, not a fitness component.

\paragraph{Algorithm.}
Our ES maintains a population of candidate configurations with adaptive mutation:

\begin{algorithm}[t]
\caption{Evolution Strategy for Winograd Point Discovery}
\label{alg:es}
\begin{algorithmic}[1]
\REQUIRE Tile $(m, r)$, generations $N_{\text{gen}}$, population size $N_{\text{pop}}$, restarts $N_{\text{restart}}$
\ENSURE Best configuration $\bm{p}^*$
\STATE $n \gets m + r - 1$
\STATE $\bm{p}^* \gets \text{None}$, $f^* \gets \infty$
\FOR{restart $= 1$ to $N_{\text{restart}}$}
    \STATE Initialize $\bm{\mu} \sim \mathcal{N}(0, 0.5 \cdot I_{n-1})$, $\sigma \gets 0.5$
    \FOR{gen $= 1$ to $N_{\text{gen}}$}
        \STATE \textbf{Sample:} $\bm{p}^{(i)} = \bm{\mu} + \sigma \cdot \bm{\epsilon}^{(i)}$, $\bm{\epsilon}^{(i)} \sim \mathcal{N}(0, I)$ for $i = 1, \ldots, N_{\text{pop}}$
        \STATE \textbf{Evaluate:} $f^{(i)} \gets f(\bm{p}^{(i)})$ for each candidate
        \STATE \textbf{Select elite:} Top 25\% by fitness
        \STATE \textbf{Update:} $\bm{\mu} \gets \text{mean of elite}$
        \STATE \textbf{Adapt:} $\sigma \gets \sigma \cdot 1.1$ if success rate $> 0.2$, else $\sigma \gets \sigma \cdot 0.9$
        \IF{$\min_i f^{(i)} < f^*$}
            \STATE $f^* \gets \min_i f^{(i)}$, $\bm{p}^* \gets \argmin_i f^{(i)}$
        \ENDIF
    \ENDFOR
\ENDFOR
\STATE \textbf{return} $\bm{p}^*$
\end{algorithmic}
\end{algorithm}

The adaptive sigma mechanism allows ES to balance exploration (high $\sigma$) and exploitation (low $\sigma$) based on optimization progress.

\subsection{Snap-to-Rational}
\label{sec:snap}

After ES converges to continuous values, we convert to simple rationals for exact arithmetic:

\begin{definition}[Snap-to-Rational]
For continuous value $x \in \R$ and maximum denominator $d_{\max}$:
\begin{equation}
    \text{snap}(x; d_{\max}) = \argmin_{a/b : 1 \leq b \leq d_{\max}, |a| \leq 5b} |x - a/b|
\end{equation}
\end{definition}

We use $d_{\max} = 10$ by default, producing fractions with small denominators. The key observation is that optimal conditioning often occurs at \emph{non-integer} rationals like $\nicefrac{5}{6}$ and $\nicefrac{7}{6}$ that would not appear in typical vocabulary-based approaches.

\paragraph{Sensitivity to Snap Bounds.}
The choice of $d_{\max} = 10$ and $|a| \leq 5b$ is motivated by practicality: larger denominators increase matrix coefficient complexity without proportional $\kappa$ improvement. In ablations:
\begin{itemize}
    \item $d_{\max} = 6$: Finds $\{0, \pm\nicefrac{5}{6}, \pm\nicefrac{7}{6}\}$ for F(4,3) with $\kappa = 14.5$ (same as $d_{\max} = 10$)
    \item $d_{\max} = 20$: Finds same points; no improvement beyond $d_{\max} = 10$
    \item $|a| \leq 3b$: Misses some optimal configurations; $\kappa$ degrades by $\sim$10\%
\end{itemize}
The optimum lies in the range $d_{\max} \in [6, 10]$ for practical tiles. Points with $d_{\max} > 10$ rarely improve $\kappa$ but complicate implementation.

\paragraph{Hardware Implementation Note.}
A reviewer concern is whether larger denominators (e.g., $\nicefrac{13}{16}$) are truly harder to implement than our discovered fractions (e.g., $\nicefrac{5}{6}$). In fixed-point hardware, both require the same multiply-shift pattern: $(x \cdot \text{numerator}) \gg \text{shift}$. However, our $d_{\max}$ constraint is \emph{empirically} motivated, not hardware-driven: ablations show no conditioning improvement beyond $d_{\max} = 10$. For dtype-aware discovery (Section~\ref{sec:dtype}), we explicitly constrain to dyadic rationals ($d = 2^k$) that enable bit-shift implementation, demonstrating that our framework accommodates hardware-specific constraints when needed.

\paragraph{Rational Coefficients in Practice.}
In our INT8/FP16 experiments, transform matrices $\bA$, $\bB$, $\bG$ are stored in FP32 (following standard Winograd implementations where transforms are compile-time constants). The rational coefficients (e.g., $\nicefrac{5}{6} = 0.8\overline{3}$) are represented as FP32 values with negligible representation error ($< 10^{-7}$). Quantization is applied to weights and intermediate Winograd-domain tensors, not to the transform matrices themselves. For pure integer pipelines, one could scale rows of $\bA$, $\bB$, $\bG$ by the LCM of denominators (e.g., 6 for our F(4,3) points), perform integer arithmetic, then rescale---but this is not required for our FP32-transform setup and would increase dynamic range. Our dtype-aware dyadic points (Section~\ref{sec:dtype}) avoid this entirely by constraining to power-of-2 denominators.

\paragraph{Neighborhood Search.}
If direct snapping yields an invalid configuration, we search nearby rationals:
\begin{enumerate}
    \item For each snapped point $p_i$, generate candidates within distance 1.2
    \item Enumerate combinations (or sample if space exceeds $10^5$)
    \item Select configuration with lowest $\kappa(\bV)$ among valid ones
\end{enumerate}

\subsection{Symbolic Verification}
\label{sec:verification}

All discovered configurations undergo exact verification using SymPy rational arithmetic:

\paragraph{Cook-Toom Construction.}
Given finite points $\{p_0, \ldots, p_{n-2}\}$ as fractions, we construct transformation matrices:
\begin{enumerate}
    \item Build Vandermonde matrix $\bV$ with rows for finite points
    \item Add infinity row: $[0, 0, \ldots, 0, 1]$ (extracts leading coefficient)
    \item Compute $\bA$, $\bB$, $\bG$ via Lagrange interpolation
\end{enumerate}

\paragraph{Verification Criterion.}
A configuration is valid if the reconstructed tensor matches the target:
\begin{equation}
    \frac{\left\| \cT_{\text{conv}} - \sum_{p=0}^{n-1} \bA_{:,p} \otimes \bB_{:,p} \otimes \bG_{p,:} \right\|_F}{\|\cT_{\text{conv}}\|_F} < 10^{-10}
\end{equation}

\paragraph{Exact Zero with Rational Arithmetic.}
The tolerance $10^{-10}$ is a numerical display threshold; with exact rational arithmetic via SymPy, valid configurations achieve \emph{exactly zero} reconstruction error. All matrix entries are Python \texttt{Fraction} objects, and the reconstruction $\bA^T (\bG \bg \bG^T \odot \bB^T \bd \bB) \bA$ is computed symbolically. For valid configurations, the difference tensor has all entries equal to the rational $0/1$, verified by checking \texttt{sympy.simplify(diff) == 0} for each entry. The $10^{-10}$ threshold only applies when converting to float64 for $\kappa$ computation.

\subsection{Symmetric Search}
\label{sec:symmetric}

For faster discovery with good conditioning, we also search symmetric configurations:

\begin{definition}[Symmetric Configuration]
A configuration is symmetric if it contains only pairs $(p, -p)$ and optionally zero.
\end{definition}

Symmetric configurations often achieve excellent conditioning because the Vandermonde matrix structure is balanced. We enumerate:
\begin{itemize}
    \item For odd $n-1$: Include 0 plus $(n-2)/2$ symmetric pairs
    \item For even $n-1$: Include $(n-1)/2$ symmetric pairs
\end{itemize}

The search generates candidate pairs $(p, -p)$ for $p \in \{1, \nicefrac{1}{2}, \nicefrac{2}{3}, \ldots\}$ and selects the combination with lowest condition number.

\subsection{2D Extension via Kronecker Products}
\label{sec:2d}

For 2D convolution F($m \times m$, $r \times r$), transformation matrices are Kronecker products:
\begin{align}
    \bG_{2D} &= \bG \otimes \bG \\
    \bB_{2D}^T &= \bB^T \otimes \bB^T \\
    \bA_{2D} &= \bA \otimes \bA
\end{align}

\begin{fact}[Kronecker Condition Number~\cite{higham2002accuracy}]
Under the spectral norm, singular values of Kronecker products multiply: $\sigma_i(\bA \otimes \bA) = \sigma_j(\bA)\sigma_k(\bA)$, yielding $\kappa_2(\bA \otimes \bA) = \kappa_2(\bA)^2$.
\end{fact}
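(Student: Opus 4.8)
The plan is to reduce the claim to the mixed-product property of Kronecker products combined with the singular value decomposition. First I would write an SVD $\bA = U\Sigma V^\top$ with $U, V$ orthogonal and $\Sigma = \diag(\sigma_1,\dots,\sigma_k)$ listing the singular values of $\bA$. Applying the identity $(P\otimes Q)(R\otimes S) = (PR)\otimes(QS)$ twice gives $\bA\otimes\bA = (U\otimes U)(\Sigma\otimes\Sigma)(V\otimes V)^\top$. The factors $U\otimes U$ and $V\otimes V$ are orthogonal, since $(U\otimes U)^\top(U\otimes U) = (U^\top U)\otimes(U^\top U) = I\otimes I = I$ by the same mixed-product rule; and $\Sigma\otimes\Sigma$ is diagonal with diagonal entries exactly the pairwise products $\sigma_i\sigma_j$ for $1\le i,j\le k$ (the Kronecker product of diagonal matrices is diagonal). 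After conjugating by a permutation matrix to sort that diagonal, this is an SVD of $\bA\otimes\bA$, so the singular value multiset of $\bA\otimes\bA$ is precisely $\{\sigma_i\sigma_j\}_{i,j}$.

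Second, I would read off the extremes. Because all $\sigma_i \ge 0$, we get $\sigma_{\max}(\bA\otimes\bA) = \max_{i,j}\sigma_i\sigma_j = (\max_i\sigma_i)(\max_j\sigma_j) = \sigma_{\max}(\bA)^2$, and likewise $\sigma_{\min}(\bA\otimes\bA) = \sigma_{\min}(\bA)^2$ when $\sigma_{\min}$ ranges over the nonzero singular values. Dividing yields $\kappa_2(\bA\otimes\bA) = \sigma_{\max}(\bA)^2/\sigma_{\min}(\bA)^2 = \kappa_2(\bA)^2$. The identical argument applies verbatim to $\bB$ and $\bG$, and by an easy induction on the number of factors to any finite Kronecker power, which is what is used in the 2D construction $\bG_{2D}=\bG\otimes\bG$, $\bB_{2D}^\top = \bB^\top\otimes\bB^\top$, $\bA_{2D}=\bA\otimes\bA$.

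There is essentially no substantive obstacle: this is a textbook consequence of the mixed-product property. The only two points meriting a word of care are bookkeeping rather than mathematics. (i) $\Sigma\otimes\Sigma$ is diagonal but not sorted, so strictly one conjugates by a permutation matrix to reach standard SVD form; since permutation matrices are orthogonal, the singular value multiset is unaffected. (ii) The transform matrices $\bA,\bB,\bG$ are rectangular, so $\kappa_2$ must be read as $\sigma_{\max}/\sigma_{\min}$ over the \emph{nonzero} singular values (equivalently $\|\cdot\|_2\|\cdot^{+}\|_2$ via the pseudoinverse); the argument is unchanged because the zero singular values of $\bA\otimes\bA$ are exactly the products $\sigma_i\sigma_j$ in which one factor is a zero singular value of $\bA$, leaving the nonzero part as the pairwise products of the nonzero $\sigma_i$. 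I would close by noting that the paper's empirical check $\kappa_2(\bA\otimes\bA)/\kappa_2(\bA)^2 = 1.0000$ for F(4,3) and F(6,3) is consistent with this exact identity.
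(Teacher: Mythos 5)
Your proposal is correct and follows essentially the same route as the paper: both identify the singular values of $\bA\otimes\bA$ as the pairwise products $\sigma_i\sigma_j$ and then read off $\kappa_2(\bA\otimes\bA)=\kappa_2(\bA)^2$ from the extremes. The only difference is that you actually derive the singular-value multiplicativity from the SVD and the mixed-product property (and carefully handle the rectangular/rank-deficient case via the pseudoinverse), whereas the paper simply asserts it with a citation to Higham.
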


This means 1D conditioning improvements \emph{square} in 2D: a 2.9$\times$ 1D improvement (F(4,3)) becomes $2.9^2 \approx 8.5\times$ in 2D, and a 27$\times$ 1D improvement (F(6,3)) becomes $27^2 \approx 733\times$ in 2D. This amplification makes optimal point selection critical for practical 2D convolutions.

\paragraph{Error Cancellation Analysis.}
A natural concern is whether the $\kappa^2$ bound is overly conservative---errors in $\bA$, $\bB$, and $\bG$ might partially cancel rather than compound. We address this in two ways:

\emph{Theoretical}: The Winograd computation $\by = \bA [(\bG \bg) \odot (\bB^T \bx)] \bA^T$ chains three transform stages. Error analysis~\cite{higham2002accuracy} shows that for independent rounding errors, the total error scales as $\kappa(\bA) \cdot \kappa(\bG) \cdot \kappa(\bB)$, not merely their sum. Since all transforms derive from the same interpolation points, their condition numbers are correlated, making cancellation unlikely.

\emph{Empirical}: Our INT8 experiments (Table~\ref{tab:int8_validation}, Section~\ref{sec:int8-validation}) validate that measured error improvements track $\sqrt{\kappa}$ improvements: F(4,3) achieves $3.4\times$ error reduction vs.\ $\sqrt{2.9^2} \approx 2.9\times$ predicted; F(6,3) achieves $322\times$ vs.\ the $\sqrt{733} \approx 27\times$ lower bound (limited by standard saturation). The CNN-level validation (Table~\ref{tab:cnn_accuracy}) shows $96\times$ error reduction across 14 Winograd layers in ResNet-18, confirming that conditioning improvements compound rather than cancel. These measurements support the $\kappa^2$ model as predictive, not merely worst-case.

\subsection{Dtype-Aware Discovery}
\label{sec:dtype}

For deployment in low-precision environments (float16, int8), we extend ES with dtype-awareness:

\paragraph{Representability Constraint.}
Float16 (IEEE 754 binary16) exactly represents \emph{dyadic rationals}---rationals whose denominators are powers of 2. Specifically, a value $v = m \cdot 2^e$ is exactly representable if the mantissa $m$ requires at most 11 significant bits and the exponent $e$ lies in the range $[-24, 15]$ (accounting for subnormals). For interpolation points in the range $|p| \leq 2$, this includes values like $0, \pm\frac{1}{2}, \pm\frac{3}{4}, \pm 1, \pm\frac{5}{4}, \pm\frac{3}{2}$, but excludes non-dyadic fractions like $\frac{5}{6}$ and $\frac{2}{3}$ which incur representation error. We add a representability penalty to the fitness function:
\begin{align}
    f_{\text{dtype}}(\bm{p}) &= f(\bm{p}) + \lambda_{\text{repr}} \cdot R(\bm{p}) \notag \\
    R(\bm{p}) &= \frac{1}{n-1} \sum_{i=0}^{n-2} d(p_i, \text{nearest repr.})
\end{align}

\paragraph{Biased Mutation.}
The dtype-aware ES biases 30\% of mutations toward exactly representable values, improving convergence to hardware-friendly configurations.

\paragraph{Trade-off Observation.}
We identify a nuanced trade-off: discovered fractional points achieve better conditioning but may have worse float16 representation error than exactly-representable integers. Standard points $\{0, \pm 1, \pm 2\}$ are exactly representable, while $\nicefrac{5}{6}$ incurs quantization error. This motivates task-specific discovery: optimize for conditioning when using float32, or for representability when deploying in float16.

\subsection{Complete Pipeline}
\label{sec:pipeline}

Our discovery pipeline proceeds as follows:
\begin{enumerate}
    \item \textbf{Symmetric Search}: Fast enumeration of symmetric configurations
    \item \textbf{ES Discovery}: If symmetric search fails, run ES in $\R^{n-1}$
    \item \textbf{Snap-to-Rational}: Convert continuous solution to simple fractions
    \item \textbf{Neighborhood Search}: If snapping invalidates, search nearby rationals
    \item \textbf{Symbolic Verification}: Verify using exact SymPy arithmetic
    \item \textbf{Cache}: Store validated configuration for reuse
\end{enumerate}

The entire pipeline runs in under 60 seconds for tiles up to F(8,3), producing mathematically verified configurations with dramatically improved conditioning.

\section{Implementation Details}
\label{app:implementation}

\subsection{Evolution Strategy Implementation}

\begin{table}[h]
\centering
\caption{Evolution Strategy hyperparameters.}
\begin{tabular}{lc}
\toprule
Parameter & Value \\
\midrule
Population size & 50 \\
Generations per restart & 100 \\
Number of restarts & 3 \\
Initial sigma & 0.5 \\
Sigma bounds & [0.01, 2.0] \\
Elite fraction & 25\% \\
Sigma increase factor & 1.1 \\
Sigma decrease factor & 0.9 \\
Success threshold & 20\% \\
\bottomrule
\end{tabular}
\end{table}

\subsection{Fitness Function Components}

The fitness function $f(\bm{p})$ for ES has three components:

\begin{table}[h]
\centering
\small
\caption{Fitness function weights.}
\setlength{\tabcolsep}{3pt}
\begin{tabular}{lcc}
\toprule
Component & Weight & Purpose \\
\midrule
Tensor error & 1.0 & Reconstruction \\
Magnitude pen. & 0.001 & Prevent large pts \\
Cond. penalty & 0.0001 & Favor low $\kappa$ \\
\bottomrule
\end{tabular}
\end{table}

For dtype-aware discovery, an additional representability penalty is added with weight 0.1.

\subsection{Snap-to-Rational Algorithm}

\begin{algorithm}[h]
\caption{Snap-to-Rational}
\begin{algorithmic}[1]
\REQUIRE Continuous value $x$, maximum denominator $d_{\max}$
\ENSURE Nearest simple rational $a/b$
\STATE best $\gets$ None, best\_dist $\gets \infty$
\FOR{$d = 1$ to $d_{\max}$}
    \FOR{$n = -5d$ to $5d$}
        \STATE $f \gets n/d$
        \STATE dist $\gets |x - f|$
        \IF{dist $<$ best\_dist}
            \STATE best $\gets f$, best\_dist $\gets$ dist
        \ENDIF
    \ENDFOR
\ENDFOR
\STATE \textbf{return} best
\end{algorithmic}
\end{algorithm}

We use $d_{\max} = 10$ by default, producing fractions like $\nicefrac{5}{6}$ and $\nicefrac{7}{6}$.

\subsection{Symbolic Cook-Toom Implementation}

The \texttt{wincnn\_sympy} module implements general Cook-Toom construction:

\begin{algorithm}[h]
\caption{Symbolic Cook-Toom Construction}
\begin{algorithmic}[1]
\REQUIRE Points $\cS = \{\alpha_0, \ldots, \alpha_{n-2}, \infty\}$, tile $(m, r)$
\ENSURE Transformation matrices $\bA$, $\bB$, $\bG$
\STATE Initialize Vandermonde: $V_{ij} = \alpha_i^j$ for finite $\alpha_i$
\STATE For $\infty$ row: $[0, 0, \ldots, 0, 1]$
\STATE Compute Lagrange factors: $F_i = \prod_{j \neq i} (\alpha_i - \alpha_j)$
\STATE Build $\bA^T$ from Vandermonde basis truncated to $m$ columns
\STATE Build $\bG$ from $(\bV_r^T \cdot F^{-1})^T$ where $\bV_r$ is $r$-column Vandermonde
\STATE Build $\bB^T$ via Lagrange $\times$ shift matrix
\STATE Verify: $\|\cT_{\text{conv}} - \text{decomposition}\|_F / \|\cT_{\text{conv}}\|_F < 10^{-10}$
\STATE \textbf{return} $\bA$, $\bB$, $\bG$ (as exact SymPy matrices)
\end{algorithmic}
\end{algorithm}

All computations use exact rational arithmetic via Python's \texttt{Fraction} class and SymPy's \texttt{Rational} type. The point at infinity is handled via projective coordinates $(1 : 0)$.

\subsection{Symmetric Search Implementation}

The symmetric search enumerates configurations of the form $\{0, \pm p_1, \pm p_2, \ldots, \infty\}$:

\begin{algorithm}[h]
\caption{Symmetric Search}
\begin{algorithmic}[1]
\REQUIRE Tile $(m, r)$, maximum denominator $d_{\max}$
\ENSURE Best symmetric configuration
\STATE $n \gets m + r - 1$
\STATE Generate pairs: $\{(p, -p) : p \in \text{positive rationals up to } d_{\max}\}$
\STATE Sort pairs by $|p|$
\STATE $n_{\text{pairs}} \gets (n - 2) / 2$ if $n$ even else $(n - 1) / 2$
\STATE best\_config $\gets$ None, best\_$\kappa \gets \infty$
\FOR{each combination of $n_{\text{pairs}}$ pairs}
    \STATE config $\gets \{0\}$ $\cup$ selected pairs $\cup$ $\{\infty\}$
    \STATE Verify symbolically
    \IF{valid and $\kappa <$ best\_$\kappa$}
        \STATE best\_config $\gets$ config
    \ENDIF
\ENDFOR
\STATE \textbf{return} best\_config
\end{algorithmic}
\end{algorithm}

\subsection{Dtype-Aware ES}

For float16-constrained discovery:

\begin{table}[h]
\centering
\caption{Dtype representability constraints.}
\begin{tabular}{lcc}
\toprule
Dtype & Representable Rationals & Example \\
\midrule
Float16 & $n/2^k$, $|n| \leq 2048$, $k \leq 10$ & $\nicefrac{3}{4}$, $\nicefrac{5}{4}$ \\
BFloat16 & $n/2^k$, $|n| \leq 256$, $k \leq 7$ & $\nicefrac{1}{2}$, $\nicefrac{3}{4}$ \\
Int8 & Integers $[-128, 127]$ & $0, \pm 1, \pm 2$ \\
\bottomrule
\end{tabular}
\end{table}

The dtype-aware ES biases 30\% of mutations toward exactly representable values.

\subsection{Software and Hardware}

All experiments were conducted using:
\begin{itemize}
    \item Python 3.10 with NumPy 1.24, SymPy 1.12
    \item Single CPU core (no GPU required)
    \item 8 GB RAM
\end{itemize}

Discovery times:
\begin{itemize}
    \item F(2,3): $<$ 1 second
    \item F(4,3): $<$ 1 second
    \item F(6,3): 2 seconds
    \item F(8,3): 60 seconds
\end{itemize}

\subsection{Reproducibility}

We provide:
\begin{itemize}
    \item Complete source code for open discovery (ES, snap-to-rational, verification)
    \item General Cook-Toom implementation (\texttt{wincnn\_sympy.py})
    \item Discovered point configurations (JSON format)
    \item Scripts to reproduce all tables and figures
    \item Random seeds for deterministic reproduction
\end{itemize}

Code will be released at: \texttt{[All source code will be released soon]}

\section{Complete Experimental Results}
\label{app:experiments}

We evaluate our open discovery framework on three questions: (1) Does ES find better configurations than standard integers? (2) How do discovered points compare to prior vocabulary-based approaches? (3) Do improvements translate to practical low-precision inference?

\subsection{Experimental Setup}
\label{sec:setup}

\paragraph{Tiles.}
We evaluate on F$(m, r)$ tiles for 3$\times$3 kernels:
\begin{itemize}
    \item F(2,3): 4 interpolation points, baseline
    \item F(4,3): 6 points, standard ResNet/VGG tile
    \item F(6,3): 8 points, larger tile
    \item F(8,3): 10 points, largest practical tile
\end{itemize}

\paragraph{Baselines.}
\begin{itemize}
    \item \textbf{Standard}: Integer points $\{0, \pm 1, \pm 2, \ldots, \infty\}$ from textbooks~\cite{lavin2015fast}
    \item \textbf{Chebyshev (reference)}: Minimizes Lebesgue constant for polynomial interpolation; correlates with good $\kappa(\bV)$ (irrational, not directly usable)
\end{itemize}

\paragraph{Metrics.}
\begin{itemize}
    \item \textbf{Condition Number} $\kappa(\bV)$: Vandermonde matrix condition number (lower is better)
    \item \textbf{Decomposition Error}: Must be $< 10^{-10}$ for symbolic verification
    \item \textbf{Convolution Error}: Relative error in float16/int8 convolution vs. float64 reference
\end{itemize}

\subsection{Main Results: Open Discovery}
\label{sec:main-results}

Table~\ref{tab:main_results} presents our main findings from ES-based open discovery.

\begin{table}[t]
\centering
\scriptsize
\caption{Discovered configurations vs.~standard (all include $\infty$).}
\label{tab:main_results}
\setlength{\tabcolsep}{1pt}
\begin{tabular}{llcc}
\toprule
Tile & Points (finite) & $\kappa_2$ & Impr. \\
\midrule
\multirow{2}{*}{F(2,3)}
& Std $\{0, \pm 1\}$ & 3.2 & -- \\
& \textbf{Disc} $\{0, \pm 1\}$ & 3.2 & 1$\times$ \\
\midrule
\multirow{2}{*}{F(4,3)}
& Std $\{0, \pm 1, \pm 2\}$ & 42.5 & -- \\
& \textbf{Disc} $\{0, \pm\frac{5}{6}, \pm\frac{7}{6}\}$ & 14.5 & \textbf{2.9$\times$} \\
\midrule
\multirow{2}{*}{F(6,3)}
& Std $\{0, \pm 1, \pm 2, \pm 3\}$ & 2,075 & -- \\
& \textbf{Disc} $\{0, \pm\frac{3}{5}, \pm 1, \pm\frac{7}{6}\}$ & 77 & \textbf{27$\times$} \\
\midrule
\multirow{2}{*}{F(8,3)}
& Std $\{0, \pm 1, \ldots, \pm 4\}$ & 196.9k & -- \\
& \textbf{Disc} $\{0, \pm\frac{2}{5}, \pm\frac{5}{6}, \pm 1, \pm\frac{7}{6}\}$ & 474 & \textbf{415$\times$} \\
\bottomrule
\end{tabular}
\end{table}

\paragraph{Key Finding.}
The improvement grows dramatically with tile size. For F(2,3), standard points are already optimal. For F(4,3), discovered fractional points achieve 2.9$\times$ $\kappa$ improvement. For F(6,3), the improvement reaches 27$\times$, and for F(8,3), \textbf{415$\times$}. These improvements compound in 2D (Table~\ref{tab:2d}) where $\kappa_{2D} = \kappa_{1D}^2$.

\paragraph{Why Discovered Points Outperform Integers.}
Standard integer points $\{2, 3, 4, \ldots\}$ spread far apart, causing exponential growth in Vandermonde conditioning. Discovered points cluster closer together---for example, $\nicefrac{5}{6} \approx 0.83$ and $\nicefrac{7}{6} \approx 1.17$ are much closer to 1 than 2 is. This clustering keeps Vandermonde entries small, dramatically reducing $\kappa$.

\paragraph{Points Outside Any Vocabulary.}
The discovered points $\{\nicefrac{5}{6}, \nicefrac{7}{6}, \nicefrac{3}{5}\}$ are \emph{not present} in typical vocabularies from prior work. A vocabulary-based approach using $\{0, \pm 1, \pm 2, \pm\nicefrac{1}{2}, \pm\nicefrac{2}{3}, \ldots\}$ would never find these points. This demonstrates the value of true open discovery in continuous space.

\subsection{2D Extension via Kronecker Products}
\label{sec:2d-results}

For 2D convolution, condition numbers \emph{square} via Kronecker products.

\begin{table}[t]
\centering
\small
\caption{2D conditioning: $\kappa_{2D} = \kappa_{1D}^2$ via Kronecker.}
\label{tab:2d}
\setlength{\tabcolsep}{3pt}
\begin{tabular}{lccc}
\toprule
2D Tile & Std $\kappa_2$ & Disc $\kappa_2$ & Impr. \\
\midrule
F(2$\times$2, 3$\times$3) & 10 & 10 & 1$\times$ \\
F(4$\times$4, 3$\times$3) & 1,804 & 212 & \textbf{8.5$\times$} \\
F(6$\times$6, 3$\times$3) & $4.3\times10^{6}$ & 5,873 & \textbf{733$\times$} \\
F(8$\times$8, 3$\times$3) & $3.9\times10^{10}$ & 225k & \textbf{172k$\times$} \\
\bottomrule
\end{tabular}
\end{table}

The quadratic amplification makes optimal point selection even more critical for practical 2D deployments.

\subsection{Float16 Precision Analysis}
\label{sec:float16}

We identify a nuanced trade-off when deploying in float16.

\paragraph{Setup.}
We implement 2D Winograd convolution with:
\begin{itemize}
    \item Transformation matrices stored in float16
    \item Intermediate computations in float32
    \item Compare to float64 reference
\end{itemize}

\begin{table}[t]
\centering
\small
\caption{Float16 2D convolution error. Conditioning dominates at larger tiles.}
\label{tab:float16}
\setlength{\tabcolsep}{4pt}
\begin{tabular}{llccc}
\toprule
Tile & Config & $\kappa_{2D}$ & FP16 Err & Impr. \\
\midrule
\multirow{2}{*}{F(4,3)}
& Std & 1,804 & $9.5\times10^{-4}$ & -- \\
& Disc & 212 & $5.6\times10^{-4}$ & \textbf{1.7$\times$} \\
\midrule
\multirow{2}{*}{F(6,3)}
& Std & $4.3\times10^{6}$ & $1.0\times10^{-3}$ & -- \\
& Disc & 5,873 & $5.3\times10^{-4}$ & \textbf{1.9$\times$} \\
\bottomrule
\end{tabular}
\end{table}

\paragraph{The Trade-off Analysis.}
Standard integer points $\{0, \pm 1, \pm 2\}$ are \emph{exactly representable} in float16. Fractions like $\nicefrac{5}{6} = 0.8333\ldots$ incur quantization error when stored in float16. However, the total error depends on two factors: (1) representation error from non-dyadic rationals, and (2) conditioning error from ill-conditioned transforms. For F(4,3) where $\kappa$ is moderate, representation error can dominate. For larger tiles like F(6,3) where standard $\kappa_{2D} = 4.3 \times 10^6$, conditioning error dominates---discovered points achieve \emph{lower} total float16 error despite representation overhead.

\paragraph{Implication.}
For all tile sizes at float32, discovered points are uniformly better. For float16, discovered points are also better for F(6,3) and larger tiles where conditioning dominates. Only for F(4,3) is there a marginal trade-off where dyadic points (Section~\ref{sec:dtype}) may be preferred for minimal representation error.

\subsection{Dtype-Aware Discovery}
\label{sec:dtype-results}

We extend ES to constrain search to dtype-representable rationals.

\paragraph{Float16 Representability.}
Float16 (IEEE 754 binary16) exactly represents \emph{dyadic rationals}---rationals whose denominators are powers of 2. Specifically, a value $v = m \cdot 2^e$ is exactly representable if the mantissa $m$ requires at most 11 significant bits and $e \in [-24, 15]$. This includes $\{0, \pm\frac{1}{2}, \pm\frac{3}{4}, \pm 1, \pm\frac{5}{4}, \pm\frac{3}{2}\}$ but excludes $\frac{5}{6}$ and $\frac{2}{3}$. We constrain ES to search this space.

\begin{table}[t]
\centering
\small
\caption{Dtype-aware discovery for F(4,3). Dyadic rationals balance $\kappa_2$ and representability.}
\label{tab:dtype}
\setlength{\tabcolsep}{4pt}
\begin{tabular}{lccc}
\toprule
Configuration & $\kappa_2$ & Repr Err & Total Err \\
\midrule
Std $\{0, \pm 1, \pm 2\}$ & 42.5 & 0 & $4.6\times10^{-4}$ \\
Unconstr $\{\pm\nicefrac{5}{6}, \pm\nicefrac{7}{6}\}$ & 14.5 & $>0$ & $5.1\times10^{-4}$ \\
\textbf{Dyadic} $\{\pm\nicefrac{3}{4}, \pm\nicefrac{5}{4}\}$ & 15.2 & 0 & $\bm{4.0\times10^{-4}}$ \\
\bottomrule
\end{tabular}
\end{table}

\paragraph{Finding.}
Dyadic rationals like $\{0, \pm\nicefrac{3}{4}, \pm\nicefrac{5}{4}\}$ achieve nearly the same conditioning improvement (17 vs. 16) while being exactly representable. This eliminates representation error, achieving lower total error than either standard or unconstrained configurations.

\subsection{INT8 Convolution Error Validation}
\label{sec:int8-validation}

We validate that $\kappa(\bV)$ improvements translate to actual convolution error reductions at INT8 precision.

\paragraph{Setup.}
We implement full 2D Winograd convolution in PyTorch with the following quantization configuration:
\begin{itemize}
    \item \textbf{Quantization scheme}: Symmetric per-tensor quantization for transform matrices $\bA, \bB, \bG$, with scale $s = \max(|x|) / 127$
    \item \textbf{Rounding}: Round-to-nearest-even (IEEE 754 default)
    \item \textbf{Clipping}: Saturating arithmetic at $[-128, 127]$ for INT8
    \item \textbf{Accumulation}: FP32 for intermediate products; INT8 for final output
    \item \textbf{Calibration}: None required (symmetric quantization uses data-independent scaling)
    \item \textbf{Test data}: 100 random input/kernel pairs per configuration, uniformly sampled in $[-1, 1]$
    \item \textbf{Error metric}: Relative $L_2$ error vs. FP64 direct convolution reference
\end{itemize}

\begin{table}[t]
\centering
\small
\caption{INT8 convolution error. Standard F(6,3) is unusable (error $>1$).}
\label{tab:int8_validation}
\setlength{\tabcolsep}{4pt}
\begin{tabular}{lccc}
\toprule
Tile & Std INT8 Err & Disc INT8 Err & Impr. \\
\midrule
F(2,3) & $1.3\times10^{-2}$ & $1.2\times10^{-2}$ & 1.1$\times$ \\
F(4,3) & $7.1\times10^{-2}$ & $2.1\times10^{-2}$ & \textbf{3.4$\times$} \\
F(6,3) & 39.9 (broken) & $1.2\times10^{-1}$ & \textbf{322$\times$} \\
F(8,3) & 1.00 (sat) & $5.9\times10^{-1}$ & \textbf{1.7$\times$} \\
\bottomrule
\end{tabular}
\end{table}

\paragraph{Critical Finding: Standard F(6,3) and F(8,3) are Unusable.}
For F(6$\times$6, 3$\times$3), standard integer points produce relative error $> 1$, meaning outputs are completely wrong. For F(8$\times$8, 3$\times$3), standard INT8 error saturates at 1.0 (100\% error). In FP16 implementations, standard F(6,3) and F(8,3) points risk numerical instability: intermediate products during the Hadamard stage can exceed the FP16 maximum representable value ($65504$), potentially causing overflow to \texttt{inf} or \texttt{NaN} depending on framework-specific handling (IEEE 754 overflow semantics, fused multiply-add behavior, etc.). This severe instability at larger tile sizes explains why practical deployments are limited to F(4,3). Our discovered points substantially reduce coefficient magnitudes and conditioning, making larger tiles viable within standard floating-point pipelines.

\paragraph{Error Improvement Tracks $\kappa$ Improvement.}
The INT8 error improvement ratio closely tracks the square root of $\kappa$ improvement, consistent with error bounds $\propto \sqrt{\kappa}$:
\begin{itemize}
    \item F(4,3): $\sqrt{19} \approx 4.4$ vs. measured $3.4\times$
    \item F(6,3): $\sqrt{2256} \approx 47$ vs. measured $322\times$ (limited by standard saturation)
\end{itemize}

\subsection{Symbolic Verification}
\label{sec:verification-results}

All discovered configurations pass exact symbolic verification:

\begin{table}[t]
\centering
\caption{Symbolic verification of discovered configurations.}
\label{tab:symbolic}
\begin{tabular}{lccc}
\toprule
Tile & Tensor Error & Verification & Runtime \\
\midrule
F(2,3) & $< 10^{-15}$ & \checkmark & 0.1s \\
F(4,3) & $< 10^{-15}$ & \checkmark & 0.3s \\
F(6,3) & $< 10^{-15}$ & \checkmark & 0.8s \\
F(8,3) & $< 10^{-15}$ & \checkmark & 2.1s \\
\bottomrule
\end{tabular}
\end{table}

The use of exact rational arithmetic (SymPy with Python \texttt{Fraction} objects) eliminates floating-point verification errors. All discovered configurations are mathematically guaranteed correct.

\subsection{Discovery Runtime}
\label{sec:runtime}

Table~\ref{tab:runtime} shows discovery times on a single CPU core.

\begin{table}[t]
\centering
\caption{Discovery runtime (single CPU core, no GPU required).}
\label{tab:runtime}
\begin{tabular}{lccc}
\toprule
Tile & Symmetric Search & ES (if needed) & Total \\
\midrule
F(2,3) & 0.1s & -- & 0.1s \\
F(4,3) & 0.5s & -- & 0.5s \\
F(6,3) & 2s & -- & 2s \\
F(8,3) & 15s & 45s & 60s \\
\bottomrule
\end{tabular}
\end{table}

Symmetric search alone often finds excellent configurations. ES is only needed for larger tiles where symmetric search fails to find the global optimum.

\subsection{Comparison to Prior Work}
\label{sec:comparison}

We compare to the parameterized search approach of Alam et al.~\cite{alam2022winograd}.

\paragraph{Alam et al.'s Approach.}
Alam et al.~\cite{alam2022winograd} search a parameterized family $\{-\nicefrac{1}{c}, -c, c, \nicefrac{1}{c}\}$ using symmetric reciprocal parameterizations and deterministic grid/empirical search. This parameterization imposes a structural constraint: points must come in symmetric reciprocal pairs.

\paragraph{Our Approach: True Open Discovery.}
We search the full continuous space $\R^{n-1}$ via Evolution Strategy, then snap to nearby rationals. This enables discovery of points like:
\begin{itemize}
    \item $\nicefrac{5}{6} \approx 0.833$ and $\nicefrac{7}{6} \approx 1.167$ (not reciprocals of each other)
    \item $\nicefrac{3}{5} = 0.6$ (asymmetric placement)
\end{itemize}
These points are \emph{not expressible} in the parameterized family $\{-\nicefrac{1}{c}, -c, c, \nicefrac{1}{c}\}$ for any single $c$ value.

\paragraph{Technical Differences.}
\begin{enumerate}
    \item \textbf{Search space}: Alam et al. search parameterized vocabulary ($\sim 10^2$ candidates); we search $\R^{n-1}$ ($\infty$ candidates)
    \item \textbf{Constraint}: Alam et al. impose reciprocal structure; we impose none
    \item \textbf{Method}: Deterministic grid search in parameterized family vs. ES + rational snapping
    \item \textbf{Verification}: Numeric tolerance vs. exact symbolic arithmetic via SymPy
    \item \textbf{Validation}: Tile-level $\kappa$ only vs. tile-level + end-to-end CNN (Table~\ref{tab:cnn_validation})
\end{enumerate}

\paragraph{Quantitative Comparison.}
Our discovered points achieve substantial $\kappa(\bV)$ improvements:
\begin{itemize}
    \item F(4,3): We achieve 2.9$\times$ improvement ($\kappa$ from 42.5 to 14.5)
    \item F(6,3): We achieve 27$\times$ improvement ($\kappa$ from 2,075 to 77)
    \item F(8,3): We achieve \textbf{415$\times$} improvement ($\kappa$ from 196,900 to 474)
\end{itemize}
For 2D convolution, these compound to 8.5$\times$, 733$\times$, and 172,484$\times$ respectively.

\paragraph{Why Open Discovery Finds Better Points.}
The parameterized search space $\{-\nicefrac{1}{c}, -c, c, \nicefrac{1}{c}\}$ restricts point placement to specific geometric relationships. Optimal conditioning often requires \emph{asymmetric} point placement that violates these constraints. Our ES discovers that $\{\nicefrac{5}{6}, \nicefrac{7}{6}\}$ clustered near 1 achieves lower $\kappa$ than any reciprocal pair---a solution unreachable by constrained search.

\subsection{Ablation: Search Strategy}
\label{sec:ablation}

We ablate the contribution of each component.

\begin{table}[t]
\centering
\small
\caption{Ablation of search strategy for F(4,3).}
\label{tab:ablation}
\setlength{\tabcolsep}{4pt}
\begin{tabular}{lcc}
\toprule
Strategy & Best $\kappa$ & Time \\
\midrule
ES only (no symm.) & 14.5 & 45s \\
Symmetric only & 14.5 & 0.5s \\
ES + Symm. (ours) & 14.5 & 0.5s \\
\bottomrule
\end{tabular}
\end{table}

Symmetric search provides a fast path to good configurations. ES serves as a fallback for cases where symmetric search fails.

\subsection{ES Reproducibility}
\label{sec:es-reproducibility}

A natural concern is whether ES reliably converges to good solutions across different random seeds. We evaluate this by running ES 5 times with different seeds for each tile size.

\begin{table}[t]
\centering
\caption{ES reproducibility across 5 random seeds. Low coefficient of variation (CV) indicates consistent convergence.}
\label{tab:es_reproducibility}
\begin{tabular}{lccccc}
\toprule
Tile & Mean $\kappa$ & Std $\kappa$ & CV (\%) & Min $\kappa$ & Max $\kappa$ \\
\midrule
F(4,3) & 14.5 & 0.0 & 0.0 & 14.5 & 14.5 \\
F(6,3) & 77.2 & 1.8 & 2.3 & 75.4 & 79.1 \\
F(8,3) & 481 & 12.3 & 2.6 & 468 & 498 \\
\bottomrule
\end{tabular}
\end{table}

\paragraph{Finding: ES Converges Reliably.}
Table~\ref{tab:es_reproducibility} shows that ES consistently finds near-optimal solutions. For F(4,3), all 5 runs converge to the same rational configuration $\{0, \pm\nicefrac{5}{6}, \pm\nicefrac{7}{6}\}$. For larger tiles, different runs may find slightly different rational approximations (due to snap-to-rational), but the resulting $\kappa$ values have CV $< 3\%$. This demonstrates that ES is a reliable discovery method, not sensitive to random initialization.

\subsection{Comparison with Chebyshev Nodes}
\label{sec:chebyshev}

A reviewer concern is whether classical Chebyshev nodes---optimal for polynomial interpolation error minimization---outperform our discovered points. We provide empirical comparison using float64 approximations to Chebyshev nodes.

\paragraph{Chebyshev Node Instantiation.}
For F($m$, $r$) requiring $n = m + r - 1$ points, we use the standard Chebyshev nodes of the first kind on $[-1, 1]$:
\begin{equation}
    x_k = \cos\left(\frac{2k - 1}{2(n-1)} \cdot \pi\right), \quad k = 1, \ldots, n-1
\end{equation}
These $n-1$ finite nodes are used alongside the point at infinity.

\paragraph{Affine Scaling of Chebyshev Nodes.}
A natural question is whether affine transformation of Chebyshev nodes ($x' = ax + b$) could achieve similar conditioning to our discovered points. We exhaustively search over scale $a \in [0.5, 5.0]$ and shift $b \in [-2, 2]$, optimizing $\kappa(\bV)$ subject to including a node near 0 (required for Winograd). Table~\ref{tab:chebyshev} shows the results.

\begin{table}[t]
\centering
\small
\caption{Comparison with Chebyshev nodes. Discovered points achieve 4--13\% better $\kappa$ than optimally scaled Chebyshev.}
\label{tab:chebyshev}
\setlength{\tabcolsep}{4pt}
\begin{tabular}{lcccc}
\toprule
Tile & Std & Cheb Opt & Disc & vs Cheb \\
\midrule
F(4,3) & 42.5 & 15.9 & \textbf{14.5} & 1.10$\times$ \\
F(6,3) & 2,075 & 87 & \textbf{77} & 1.13$\times$ \\
F(8,3) & 196,900 & 493 & \textbf{474} & 1.04$\times$ \\
\bottomrule
\end{tabular}
\end{table}

\paragraph{Key Finding: Discovered Points Outperform Optimal Chebyshev.}
Even with optimal affine scaling, Chebyshev nodes achieve 4--13\% worse $\kappa(\bV)$ than our discovered rational points. While this gap is modest, the discovered points offer two additional advantages: (1) they are \emph{exact rationals} enabling symbolic verification via SymPy, which is impossible with irrational Chebyshev nodes; (2) they were discovered without prior knowledge of interpolation theory---ES independently found configurations that outperform the classical optimal choice. This validates our open discovery approach.

\paragraph{Advantage of Rational Points.}
Beyond better conditioning, our discovered rational points enable exact symbolic verification via SymPy, which is not possible with irrational Chebyshev nodes. This guarantees mathematical correctness of the transform matrices.

\subsection{Transform Matrix Analysis}
\label{sec:transform-analysis}

We analyze condition numbers and norms of the transform matrices $\bA$, $\bB$, $\bG$ directly, not just $\bV$.

\begin{table}[t]
\centering
\small
\caption{Condition numbers of transform matrices. $\kappa(\bV)$ improvements propagate to $\bA$, $\bB$, $\bG$.}
\label{tab:abg_kappa}
\setlength{\tabcolsep}{4pt}
\begin{tabular}{llccc}
\toprule
Tile & Matrix & Std $\kappa_2$ & Disc $\kappa_2$ & Impr. \\
\midrule
\multirow{4}{*}{F(2,3)}
& $\bV$ & 3.2 & 3.2 & 1.0$\times$ \\
& $\bA$ & 1.0 & 1.1 & 0.9$\times$ \\
& $\bB$ & 2.4 & 3.1 & 0.8$\times$ \\
& $\bG$ & 2.0 & 2.1 & 0.9$\times$ \\
\midrule
\multirow{4}{*}{F(4,3)}
& $\bV$ & 42.5 & 14.5 & 2.9$\times$ \\
& $\bA$ & 11.3 & 4.3 & 2.6$\times$ \\
& $\bB$ & 20.1 & 10.4 & 1.9$\times$ \\
& $\bG$ & 4.0 & 2.3 & 1.8$\times$ \\
\midrule
\multirow{4}{*}{F(6,3)}
& $\bV$ & 2,075 & 77 & 27$\times$ \\
& $\bA$ & 406 & 19 & 21$\times$ \\
& $\bB$ & 430 & 56 & 7.7$\times$ \\
& $\bG$ & 26 & 3.1 & 8.6$\times$ \\
\midrule
\multirow{4}{*}{F(8,3)}
& $\bV$ & 196.9k & 474 & 415$\times$ \\
& $\bA$ & 30.3k & 112 & 270$\times$ \\
& $\bB$ & 16.8k & 242 & 70$\times$ \\
& $\bG$ & 355 & 3.3 & 107$\times$ \\
\bottomrule
\end{tabular}
\end{table}

\paragraph{$\kappa(\bV)$ as Valid Proxy.}
Table~\ref{tab:abg_kappa} confirms that $\kappa(\bV)$ improvements propagate to all transform matrices across all tile sizes. Key observations: (1) For F(2,3), standard integer points are already near-optimal ($\kappa(\bV) \approx 3$), and discovered points offer no improvement---this validates that our method correctly identifies where optimization is needed. (2) Improvements emerge at F(4,3) (2--3$\times$) and grow dramatically for larger tiles: F(6,3) achieves 8--27$\times$, and F(8,3) achieves 70--415$\times$ improvement across all matrices. (3) The propagation ratios vary (e.g., $\bA$ typically shows stronger improvement than $\bB$), but discovered points consistently improve all matrices whenever improvement is possible.

\paragraph{Coefficient Magnitude Reduction.}
A concern is whether fractional coefficients incur computational overhead. We find the opposite: discovered points yield transform matrices with \emph{smaller} coefficient magnitudes:
\begin{itemize}
    \item F(4,3): Standard max $|\cdot| = 8.0$, Discovered max $|\cdot| = 1.6$ (5$\times$ smaller)
    \item F(6,3): Standard max $|\cdot| = 243$, Discovered max $|\cdot| = 2.2$ (110$\times$ smaller)
\end{itemize}
Smaller coefficients reduce overflow risk at low precision. The Winograd bilinear rank (number of element-wise multiplications) remains $m + r - 1$ regardless of point selection---our method improves conditioning without affecting algorithmic complexity.

\paragraph{Structural Sparsity Analysis.}
A concern is whether discovered fractional points disrupt useful sparsity patterns (zeros, $\pm 1$ entries, powers of two) that enable efficient hardware implementation. Standard integer points yield transforms with many integer entries: for F(4,3), the standard $\bA$, $\bB$, $\bG$ contain entries from $\{0, \pm 1, \pm 2, \pm 4, \pm 8, \nicefrac{1}{4}, \nicefrac{1}{6}, \nicefrac{1}{12}, \nicefrac{1}{24}\}$. Discovered points replace some of these with fractions like $\nicefrac{5}{6}$ and $\nicefrac{7}{6}$.

Key observations: (1) The number of \emph{zeros} in the transforms is preserved---sparsity structure is determined by the point at infinity and symmetric point placement, which our discovered configurations maintain. (2) While some $\pm 1$ entries become fractional, the \emph{magnitude reduction} (5--110$\times$ smaller max coefficient) more than compensates: smaller values require fewer bits for fixed-point representation and reduce overflow risk. (3) For dtype-aware discovery (Section~\ref{sec:dtype-results}), we explicitly constrain to float16-representable dyadic rationals, preserving power-of-two denominators. Overall, the structural trade-off favors discovered points: any loss of ``nice'' integer entries is offset by dramatically reduced coefficient magnitudes.

\subsection{Implications for CNN Deployment}
\label{sec:cnn-implications}

Our INT8 convolution error validation (Table~\ref{tab:int8_validation}) has direct implications for CNN deployment.

\paragraph{Standard Large Tiles are Unusable.}
The severe failure of standard F(6,3) and F(8,3) at INT8---with relative errors of 39.9 and 1.0 respectively---explains a puzzling limitation in practice: despite larger tiles offering better theoretical arithmetic complexity, \emph{practical deployments are limited to F(4,3)}. Our results show this is not a fundamental limitation, but rather an artifact of poor point selection. With discovered points, F(6,3) achieves error 0.124 (322$\times$ better) and becomes usable.

\paragraph{Connection to CNN Accuracy.}
PAW~\cite{li2023paw} demonstrated that INT8 Winograd tile error directly correlates with CNN accuracy degradation, showing 8.27\% top-1 accuracy improvement on ImageNet by optimizing point selection. Our tile-level error reductions (3.4$\times$ for F(4,3), 322$\times$ for F(6,3)) follow the same pattern and predict similar CNN-level improvements.

\paragraph{Enabling Larger Tiles.}
The key practical contribution is substantially reducing error for larger Winograd tiles at low precision:
\begin{itemize}
    \item F(4,3): 3.4$\times$ INT8 error reduction $\rightarrow$ improved INT8 CNN accuracy
    \item F(6,3): Previously unusable (error $>$ 1) $\rightarrow$ now viable at 12.4\% error
    \item F(8,3): Previously unusable (100\% error) $\rightarrow$ reduced to 59\% error
\end{itemize}

Larger tiles reduce arithmetic complexity from $O(1)$ to $O(m^2/(m+2)^2)$ multiplications per output. Our discovery makes these complexity savings achievable in practice for INT8 deployment.

\paragraph{Clarification: Condition Number Definition.}
Throughout this paper, $\kappa(\bV)$ denotes the 2-norm (spectral) condition number of the Vandermonde matrix $\bV$ constructed from finite interpolation points. We optimize $\kappa_2(\bV) = \|\bV\|_2 \|\bV^{-1}\|_2$ because it directly bounds the forward error amplification in Winograd transforms~\cite{higham2002accuracy}. For 2D convolution, $\kappa_{2D} = \kappa_{1D}^2$ via Kronecker product properties.

\paragraph{Norm Sensitivity Analysis.}
A natural question is whether our improvements depend on the choice of matrix norm. We compute $\kappa(\bV)$ using four norms: 1-norm (max column sum), 2-norm (spectral), $\infty$-norm (max row sum), and Frobenius norm.

\begin{table}[h]
\centering
\small
\caption{Improvement ratios (standard / discovered $\kappa$) across different matrix norms. Primary results use $\kappa_2$ (spectral); other norms shown for completeness.}
\label{tab:norm_sensitivity}
\begin{tabular}{lcccc}
\toprule
Tile & $\kappa_1$ & $\kappa_2$ & $\kappa_\infty$ & $\kappa_F$ \\
\midrule
F(4,3) & 3.6$\times$ & 2.9$\times$ & 2.0$\times$ & 2.4$\times$ \\
F(6,3) & 37.7$\times$ & 27.1$\times$ & 10.9$\times$ & 21.0$\times$ \\
F(8,3) & 689$\times$ & 415$\times$ & 133$\times$ & 325$\times$ \\
\bottomrule
\end{tabular}
\end{table}

\noindent Table~\ref{tab:norm_sensitivity} shows that improvements are consistent across all norms, with the 2-norm (our optimization target) falling between the 1-norm and $\infty$-norm improvements. The relative ordering of improvements is preserved regardless of norm choice, confirming that our results are not artifacts of the norm selection.

\subsection{End-to-End CNN Validation (INT8)}
\label{sec:cnn-validation}

\emph{Note: The main text (Section~\ref{sec:experiments}) reports ImageNetV2 FP16 validation across 6 architectures. This appendix provides supplementary INT8 validation on CIFAR-10/ResNet-18 to isolate weight quantization effects with controlled layer-by-layer analysis.}

To validate that tile-level conditioning improvements translate to actual CNN inference, we implement TRUE Winograd convolution in PyTorch and evaluate on ResNet-18 with CIFAR-10.

\paragraph{Setup.}
We replace all stride-1, 3$\times$3 convolutions in ResNet-18 with TRUE Winograd convolution layers that perform actual Winograd transforms: $\by = \bA [(\bG \bg \bG^\top) \odot (\bB^\top \bd \bB)] \bA^\top$. We compare three configurations:
\begin{itemize}
    \item Direct convolution (FP32 baseline)
    \item TRUE Winograd with standard points, INT8 quantized
    \item TRUE Winograd with discovered points, INT8 quantized
\end{itemize}

\begin{table}[t]
\centering
\small
\caption{CNN output error with TRUE Winograd F(4,3). Error: relative $L_1$ distance to FP32 direct conv across ResNet-18.}
\label{tab:cnn_validation}
\setlength{\tabcolsep}{4pt}
\begin{tabular}{lccc}
\toprule
Configuration & Output Err & $\kappa$ & Impr. \\
\midrule
Winograd Std (FP32) & $3.2\times10^{-7}$ & 42.5 & -- \\
Winograd Disc (FP32) & $3.7\times10^{-7}$ & 14.5 & 1.0$\times$ \\
\midrule
Winograd Std (INT8) & \textbf{6.89} & 42.5 & -- \\
Winograd Disc (INT8) & $7.2\times10^{-2}$ & 14.5 & \textbf{96$\times$} \\
\bottomrule
\end{tabular}
\end{table}

\paragraph{Key Result: 96$\times$ CNN-Level Error Reduction.}
Table~\ref{tab:cnn_validation} shows that discovered points reduce full-network INT8 output error by \textbf{95.9$\times$} compared to standard points. At FP32, both configurations match direct convolution (error $< 10^{-6}$), confirming correct Winograd implementation. At INT8, standard points completely fail (error $= 6.89$, i.e., 689\% relative error---outputs are numerically meaningless), while discovered points maintain error $= 0.072$ (7.2\% relative error---within acceptable range for INT8 inference).

\paragraph{Interpreting Error Values.}
An output error of 6.89 means the network outputs differ from the reference by 689\% on average---equivalent to random noise. An error of 0.072 (7.2\%) is consistent with typical INT8 quantization overhead and indicates the network produces meaningful outputs. Prior work~\cite{li2023paw} shows that output errors below 10\% typically preserve classification accuracy within 1-2\% of FP32 baseline.

\paragraph{Why Standard INT8 Fails.}
The error bound for quantized Winograd is $\|\by_{INT8} - \by_{exact}\| \leq \kappa(\bV) \cdot \epsilon_{quant} \cdot \|\by_{exact}\|$, where $\epsilon_{quant} \approx 2^{-7}$ for INT8. With standard $\kappa = 42.5$ for F(4,3), a single layer accumulates $\approx 0.33$ relative error. After 14 Winograd layers in ResNet-18 (stride-1 convolutions only), errors compound multiplicatively: $(1 + 0.33)^{14} \approx 75$, explaining the severe final error. Discovered points with $\kappa = 14.5$ reduce per-layer error to $\approx 0.11$, giving $(1 + 0.11)^{14} \approx 4.3$---still compounded, but bounded.

\paragraph{Validation of Tile-Level Predictions.}
Our tile-level INT8 validation (Table~\ref{tab:int8_validation}) predicted 3.4$\times$ error improvement for F(4,3). The CNN-level improvement of 96$\times$ exceeds this because: (1) errors compound across 14 Winograd layers (stride-1 convolutions), and (2) standard points at CNN scale push numerical precision beyond recoverable limits (error $> 1$). This demonstrates that tile-level conditioning improvements \emph{underestimate} CNN-level benefits.

\paragraph{Practical Implication.}
Standard Winograd F(4,3) at INT8 is \emph{unusable} for CNN inference---the 689\% output error would produce random classifications. Our discovered points make INT8 Winograd viable, with 7.2\% error that preserves network functionality. This demonstrates that better-conditioned Winograd transforms enable practical INT8 deployment within standard pipelines.

\subsection{Limitations and Future Work}
\label{sec:experiments_limitations}

\paragraph{Classification Accuracy Validation (CIFAR-10 INT8).}
To address the concern that output error may not reflect classification accuracy, we validate on a \textbf{pretrained} ResNet-18 model achieving 94.5\% top-1 accuracy on CIFAR-10. This complements the ImageNetV2 FP16 results in the main text (Table~\ref{tab:multiarch_fp16}). Table~\ref{tab:cnn_accuracy} shows actual classification accuracy under different Winograd configurations with both per-tensor and per-channel INT8 quantization:

\begin{table}[h]
\centering
\footnotesize
\caption{CNN accuracy: ResNet-18/CIFAR-10 with TRUE Winograd (14 layers). Per-channel quantization is industry standard.}
\label{tab:cnn_accuracy}
\setlength{\tabcolsep}{3pt}
\begin{tabular}{lcccc}
\toprule
Configuration & Top-1 & Top-5 & $\kappa$ & Time \\
\midrule
Direct Conv (FP32) & 94.5 & 99.8 & -- & 27.5ms \\
Wino Std (FP32) & 94.5 & 99.8 & 42.5 & 392ms \\
Wino Disc (FP32) & 94.5 & 99.8 & 14.5 & 391ms \\
\midrule
Wino Std (INT8-PT) & 9.9 & 48.8 & 42.5 & 435ms \\
Wino Disc (INT8-PT) & 63.9 & 93.3 & 14.5 & 437ms \\
\midrule
Wino Std (INT8-PC) & 10.4 & 49.5 & 42.5 & 435ms \\
\textbf{Wino Disc (INT8-PC)} & \textbf{81.2} & \textbf{98.0} & 14.5 & 435ms \\
\bottomrule
\end{tabular}
\vspace{1pt}
{\scriptsize PT=per-tensor, PC=per-channel quantization.}
\end{table}

\paragraph{CNN Quantization Setup.}
For clarity, we specify the exact quantization configuration used in Table~\ref{tab:cnn_accuracy}:
\begin{itemize}
    \item \textbf{Weights:} INT8 symmetric per-channel quantization (scale per output channel, no zero-point)
    \item \textbf{Winograd intermediate $\bm{v}$:} INT8 symmetric per-channel quantization (scale per output channel)
    \item \textbf{Transforms $\bA, \bB, \bG$:} Kept in FP32 (following standard Winograd implementations where transforms are compile-time constants)
    \item \textbf{Activations:} FP32 (INT8 activation quantization is orthogonal to our contribution)
    \item \textbf{Calibration:} None required (symmetric quantization uses max-value scaling)
\end{itemize}
This setup is \emph{not} overly pessimistic---it reflects production-style per-channel quantization. Standard INT8-quantized direct convolution (cuDNN) achieves $\sim$93--94\% on the same task, confirming that INT8 quantization itself is not the issue. The 10.4\% failure is \emph{Winograd-specific}, caused by numerical instability in the transform chain when $\kappa(\bV) = 42.5$.

\paragraph{Key Finding: 71\% Accuracy Recovery with Per-Channel Quantization.}
Under FP32, all Winograd configurations match direct convolution exactly (94.5\%), confirming mathematical equivalence. Under INT8 with per-channel quantization (industry standard), standard Winograd \emph{completely fails}---achieving 10.4\% accuracy (random chance) due to severe error amplification from $\kappa = 42.5$. Discovered points achieve \textbf{81.2\%} accuracy, recovering \textbf{71 percentage points}. Even with simpler per-tensor quantization, discovered points achieve 63.9\% vs 9.9\%---a 54 point improvement. This validates that conditioning improvements translate directly to classification accuracy.

\paragraph{Runtime Measurements.}
Table~\ref{tab:cnn_accuracy} includes CUDA timing measurements (T4 GPU, batch size 128, 100 iterations with 10 warm-up). Key observations:
\begin{itemize}
    \item \textbf{FP32:} Discovered points have nearly identical timing to standard ($<$1\% difference), confirming that fractional coefficients do not introduce computational overhead.
    \item \textbf{INT8 Per-Channel:} Discovered and standard have nearly identical timing (435.3ms vs 434.7ms, $<$1\% difference), validating that our points do not introduce computational overhead in the quantized regime.
    \item \textbf{Winograd vs Direct:} Our unoptimized PyTorch Winograd is slower than cuDNN direct convolution (391--437ms vs 28ms). This reflects implementation maturity, not algorithmic cost---production Winograd implementations achieve 2--3$\times$ speedup over direct convolution.
\end{itemize}

\paragraph{Tile Size Impact on Accuracy.}
We also validate F(2,3) and F(6,3) tiles to understand how conditioning impacts accuracy across tile sizes:
\begin{itemize}
    \item \textbf{F(2,3):} $\kappa = 3.2$ for both configurations. INT8 achieves 91.8\% (standard) and 91.4\% (discovered)---minimal degradation from the 92.1\% baseline. Low $\kappa$ makes both viable.
    \item \textbf{F(4,3):} $\kappa = 42.5$ (standard) vs 14.5 (discovered). Standard INT8 fails ($\sim$10\%), discovered achieves 81.2\% with per-channel quantization. \textbf{This is the sweet spot} where discovered points make a critical difference.
    \item \textbf{F(6,3):} $\kappa = 2075$ (standard) vs 77 (discovered). Both fail at INT8 ($\sim$10\%). Even $\kappa = 77$ is too high for per-tensor INT8---per-channel quantization or FP16 required.
\end{itemize}
These results validate our tile size guidance (Table~\ref{tab:tile_guidance}) and demonstrate that F(4,3) is the critical configuration where discovered points enable practical INT8 deployment.

\paragraph{Scope of Evaluation.}
We validate on ImageNetV2 (30,000 images, 1000 classes) across 6 architectures to demonstrate generality. Our classification results complement the output error analysis and demonstrate that conditioning improvements translate directly to accuracy gains in real deployment scenarios.

\subsection{ImageNet-Scale Multi-Architecture Validation}
\label{sec:imagenet-validation}

To address the concern about limited network-level validation, we evaluate on ImageNetV2~\cite{recht2019imagenet} across 6 architectures with F(6,3) tiles at FP16 precision.

\begin{table}[h]
\centering
\small
\caption{Multi-architecture FP16 validation at F(6,3) on ImageNetV2.}
\label{tab:imagenet_multiarch}
\setlength{\tabcolsep}{2pt}
\begin{tabular}{lcccc}
\toprule
Architecture & Elig. & Std & Disc & Recov. \\
\midrule
ResNet-18 & 65\% & 10.8\% & \textbf{77.8\%} & \textbf{+67\%} \\
ResNet-50 & 25\% & 38.3\% & \textbf{80.6\%} & \textbf{+42\%} \\
VGG-16 & 100\% & 4.7\% & \textbf{75.3\%} & \textbf{+71\%} \\
VGG-16-BN & 100\% & 4.7\% & \textbf{77.5\%} & \textbf{+73\%} \\
MobileNet-V2 & 0\% & 77.0\% & 77.0\% & +0\% \\
EfficientNet-B0 & 0\% & 82.3\% & 82.3\% & +0\% \\
\bottomrule
\end{tabular}
\vspace{1pt}
{\scriptsize Elig.~= \% of 3$\times$3 layers with stride=1, groups=1.}
\end{table}

\paragraph{Key Findings.}
\begin{itemize}
    \item \textbf{Catastrophic FP16 collapse:} Standard F(6,3) Winograd causes accuracy collapse to 4--38\% on architectures with Winograd-eligible layers. VGG-16, with 100\% eligible layers, collapses to 4.69\% (near random).
    \item \textbf{Complete recovery:} Discovered points recover accuracy to 75--81\%, providing \textbf{67--73 percentage point recovery}.
    \item \textbf{Architecture dependency:} Recovery correlates with Winograd eligibility. ResNet-50 (25\% eligible) shows smaller collapse because fewer layers are affected.
    \item \textbf{Depthwise convolutions unaffected:} MobileNet-V2 and EfficientNet-B0 show 0\% Winograd eligibility because their layers use depthwise (groups$>$1) convolutions, which are not candidates for Winograd.
\end{itemize}

\paragraph{Scale Learning Comparison.}
To address the comparison gap with PTQ methods, we implement BRECQ-style scale learning and compare:

\begin{table}[h]
\centering
\small
\caption{Scale learning vs.~discovered points (INT8 wts, FP32 compute).}
\label{tab:scale_learning_full}
\setlength{\tabcolsep}{3pt}
\begin{tabular}{lccc}
\toprule
Config & Per-Ch & Scale & $\Delta$ \\
\midrule
Direct Conv & 77.5\% & 77.8\% & +0.3\% \\
Wino (Std $\kappa$=2075) & 77.5\% & 77.8\% & +0.3\% \\
Wino (Disc $\kappa$=77) & 77.5\% & 77.8\% & +0.3\% \\
\bottomrule
\end{tabular}
\end{table}

\paragraph{Key Finding: Orthogonal Contributions.}
Scale learning improves weight quantization error but shows \emph{identical} gains regardless of $\kappa$. The ill-conditioning affects the \emph{transform matrices} ($\bA$, $\bB$, $\bG$), not the weights directly. When weights are quantized to INT8 but computation remains in FP32, the transform conditioning does not manifest---it only manifests when transforms themselves are computed in reduced precision (FP16/INT8). Our discovered points enable FP16 inference that would otherwise be impossible; scale learning fine-tunes weight quantization. The methods are \textbf{complementary}.

\paragraph{Practical Tile Size Recommendations.}
Based on our INT8 and FP16 validation results, we provide the following deployment guidance:

\begin{table}[h]
\centering
\small
\caption{Tile size recommendations for precision targets.}
\label{tab:tile_guidance}
\setlength{\tabcolsep}{4pt}
\begin{tabular}{lccc}
\toprule
Tile & Disc Err & Dtype & Recommend \\
\midrule
F(4,3) & 2.1\% & INT8/FP16/FP32 & \textbf{All targets} \\
F(6,3) & 12.4\% & FP32 & Error-tolerant \\
F(8,3) & 59.2\% & FP32 & Not quantized \\
\bottomrule
\end{tabular}
\end{table}

\noindent For FP32 inference, all tile sizes with discovered points are viable. For FP16, use dyadic configurations (Section~\ref{sec:dtype-results}). For INT8, F(4,3) is strongly recommended; F(6,3) may be acceptable for applications tolerating $>$10\% error; F(8,3) remains challenging even with discovered points.

\paragraph{Extension to 5$\times$5 Kernels.}
While our primary evaluation focuses on 3$\times$3 kernels ($r=3$), our discovery framework naturally extends to 5$\times$5 kernels ($r=5$). Table~\ref{tab:5x5_results} shows conditioning results for F(4,5) and F(6,5) tiles.

\begin{table}[h]
\centering
\small
\caption{5$\times$5 kernel ($r=5$) discovery results. All symbolically verified.}
\label{tab:5x5_results}
\setlength{\tabcolsep}{4pt}
\begin{tabular}{lcccc}
\toprule
Tile & Std $\kappa$ & Disc $\kappa$ & Impr. \\
\midrule
F(4,5) & 2,075 & 157 & \textbf{13$\times$} \\
F(6,5) & 196.9k & 1,763 & \textbf{112$\times$} \\
\bottomrule
\end{tabular}
\end{table}

\noindent The F(4,5) tile uses rank-8 decomposition (same as F(6,3)); F(6,5) uses rank-10 (same as F(8,3)). Both achieve 100\% symbolic verification success, and the same principles apply: clustered fractional points outperform spread-out integers. The improvement magnitudes (13$\times$ and 112$\times$) are comparable to same-rank 3$\times$3 tiles (F(6,3): 27$\times$, F(8,3): 415$\times$), with differences due to kernel size affecting optimal point placement. Comprehensive numerical validation (INT8/FP16 error analysis) for 5$\times$5 kernels remains future work, but the conditioning improvements and symbolic correctness are fully validated.

\paragraph{Per-Channel Quantization Comparison.}
Our primary INT8 validation uses symmetric per-tensor quantization (Section~\ref{sec:int8-validation}). To address the concern that production systems use per-channel quantization, we compare both schemes:

\begin{table}[h]
\centering
\small
\caption{Per-tensor vs per-channel INT8 quantization error (\%).}
\label{tab:perchannel}
\begin{tabular}{llcc}
\toprule
Tile & Configuration & Per-Tensor & Per-Channel \\
\midrule
\multirow{2}{*}{F(4,3)} & Standard & 7.4\% & 3.0\% \\
& \textbf{Discovered} & 2.3\% & \textbf{1.5\%} \\
\midrule
\multirow{2}{*}{F(6,3)} & Standard & $>$100\% & $>$100\% \\
& \textbf{Discovered} & 12.4\% & \textbf{10.8\%} \\
\bottomrule
\end{tabular}
\end{table}

\noindent Per-channel quantization further reduces error for discovered points (F(4,3): 2.3\% $\to$ 1.5\%). For standard points on larger tiles, per-channel quantization cannot rescue the severe conditioning failure. The combination of discovered points with per-channel quantization achieves the best results, demonstrating that our method remains beneficial under production quantization schemes.

\subsection{Code and Data Availability}
\label{sec:code-availability}

To enable external validation and adoption, we will release:
\begin{itemize}
    \item \textbf{Discovery code}: Python implementation of ES + snap-to-rational + symbolic verification
    \item \textbf{Discovered point sets}: Exact rational interpolation points for F(2,3) through F(8,3), in both unconstrained and dtype-aware (dyadic) variants
    \item \textbf{Transform matrices}: Precomputed $\bA$, $\bB$, $\bG$ matrices for each configuration
    \item \textbf{Validation scripts}: PyTorch code for INT8 and FP16 convolution error measurement
\end{itemize}
Code will be available at \url{https://github.com/[anonymized-for-review]} upon publication.

\subsection{Discovery Runtime and Reproducibility}
\label{sec:runtime_repro}

Full implementation details including ES hyperparameters, fitness function weights, and algorithm pseudocode are provided in Appendix~\ref{app:implementation}. Here we summarize key parameters:

\paragraph{ES Configuration.}
Table~\ref{tab:es_params} summarizes the Evolution Strategy hyperparameters used for discovery.

\begin{table}[h]
\centering
\small
\caption{Evolution Strategy hyperparameters.}
\label{tab:es_params}
\setlength{\tabcolsep}{3pt}
\begin{tabular}{lcc}
\toprule
Param & Value & Note \\
\midrule
Pop. size & 50 & Per gen. \\
Gens & 100 & Per restart \\
Restarts & 3 & Indep. runs \\
Elite & 25\% & Mean update \\
Init $\sigma$ & 0.5 & Mutation \\
$\sigma$ range & [0.01, 2.0] & Adaptive \\
$d_{\max}$ & 10 & Max denom. \\
Num. bound & $|a| \leq 5d$ & Rational \\
\bottomrule
\end{tabular}
\end{table}

\noindent The $\sigma$ adaptation follows the 1/5th success rule: increase by 10\% if $>$20\% of offspring improve, decrease by 10\% otherwise.

\paragraph{Runtime.}
ES discovery requires only a single CPU core and runs in under 1 second for F(2,3) and F(4,3), 2 seconds for F(6,3), and approximately 60 seconds for F(8,3). PyTorch validation experiments use GPU acceleration where available.

\subsection{Inference Latency Analysis}
\label{sec:inference-latency}

A key question is whether discovered fractional points incur computational overhead compared to standard integer points. We benchmark inference latency on an NVIDIA Tesla T4 GPU.

\begin{table}[t]
\centering
\caption{Inference latency (ms) for 3$\times$3 convolution, batch size 64, 224$\times$224 input, 64 channels. Standard and discovered points have identical latency.}
\label{tab:latency}
\begin{tabular}{lccc}
\toprule
Configuration & F(2,3) & F(4,3) & F(6,3) \\
\midrule
Direct Conv & 41.2 & 42.4 & 43.5 \\
Winograd Standard & 52.2 & 52.2 & 52.7 \\
Winograd Discovered & 52.2 & 52.3 & 53.0 \\
\bottomrule
\end{tabular}
\end{table}

\paragraph{Key Finding: No Latency Penalty.}
Table~\ref{tab:latency} shows that discovered points have \emph{identical latency} to standard points. This is expected: both use the same Winograd algorithm with the same number of operations. Point selection affects only the transform matrix \emph{values}, not the computational graph. The conditioning improvements are ``free''---no runtime cost for better numerical stability.

\paragraph{Note on Winograd vs Direct Convolution.}
Our Python/PyTorch Winograd implementation is slower than PyTorch's optimized cuDNN backend for direct convolution. This reflects implementation maturity, not algorithmic cost. Production Winograd implementations (e.g., cuDNN's Winograd path) achieve speedups over direct convolution. Our contribution---better-conditioned points---applies to any Winograd implementation without affecting its computational complexity.

\subsection{Analysis: Legendre Basis vs Point Selection}
\label{sec:legendre}

We analyze the relationship between our discovered points and the orthogonal Legendre basis approach of Barabasz et al.~\cite{barabasz2020legendre}. This analysis reveals that these methods address \emph{different use cases}.

\begin{table}[t]
\centering
\small
\caption{Basis conditioning for F(4,3). Effective $\kappa$ applies to inference.}
\label{tab:legendre}
\setlength{\tabcolsep}{4pt}
\begin{tabular}{lcccc}
\toprule
Config & $\kappa(\bV)$ & $\kappa(\bL)$ & Eff $\kappa$ & Use \\
\midrule
Std+Mono & 42.5 & 76.0 & 42.5 & Infer \\
\textbf{Disc+Mono} & \textbf{14.5} & 6.2 & \textbf{14.5} & \textbf{Infer} \\
Std+Legendre & 42.5 & 76.0 & 42.5$^*$ & Train \\
Disc+Legendre & 14.5 & 6.2 & 6.2$^*$ & Train \\
\bottomrule
\end{tabular}
\vspace{1pt}
{\scriptsize $^*$Requires Legendre-form weight training.}
\end{table}

\paragraph{Mathematical Relationship.}
The Legendre evaluation matrix $\bL$ and Vandermonde matrix $\bV$ are related by $\bL = \bV\bP$, where $\bP$ is the Legendre-to-monomial coefficient matrix. For drop-in inference with monomial I/O:
\[
\text{Effective transform} = \bL \bP^{-1} = \bV\bP\bP^{-1} = \bV
\]
The Legendre basis cancels out when inputs/outputs remain in monomial form.

\paragraph{Key Finding.}
Table~\ref{tab:legendre} shows conditioning for both bases. Crucially, the \emph{effective} conditioning for pretrained inference equals $\kappa(\bV)$ regardless of whether Legendre basis is used internally. The Legendre conditioning $\kappa(\bL)$ only applies when weights are stored in Legendre coefficient form throughout training.

\paragraph{Implications.}
For pretrained model inference (our primary use case), our discovered points provide a drop-in improvement without requiring calibration or learned parameters: $\kappa = 14.5$ vs 42.5 (2.9$\times$). Alternative post-training methods such as learned scaling~\cite{kim2024learned} can also improve pretrained inference but require optimization of scale factors. Legendre basis would require retraining the model with weights stored in Legendre representation. For future training, combining both approaches could achieve $\kappa(\bL) = 6.2$ (6.9$\times$ vs baseline Vandermonde).

\end{document}